\useunder{\uline}{\ul}{}
\newtheorem{definition}{Definition}
\newtheorem{proposition}{Proposition}
\newtheorem{proof}{Proof}
\begin{document}
\title{Label-Informed Outlier Detection Based on Granule Density}

\author{Baiyang Chen, Zhong Yuan, Dezhong Peng, Hongmei Chen, Xiaomin Song, and Huiming Zheng
    \thanks{This work was supported by the National Natural Science Foundation of China (62306196 and 62376230), the Sichuan Science and Technology Program (2024NSFTD0049, 2024YFHZ0144, 2024YFHZ0089 and 2024NSFSC0443
    ), and the Fundamental Research Funds for the Central Universities (YJ202245).}
    \thanks{Baiyang Chen, Zhong Yuan, and Dezhong Peng are with College of Computer Science, Sichuan University, Chengdu 610065, China; 
    Hongmei Chen is with School of Computing and Artificial Intelligence, Southwest Jiaotong University, Chengdu 611756, China; 
    Xiaomin Song, Huiming Zheng and Dezhong Peng are with Sichuan National Innovation New Vision UHD Video Technology Co., Ltd., Chengdu 610095, China.
    }
    \thanks{Corresponding author: Zhong Yuan, E-mail: yuanzhong@scu.edu.cn}
    }
    
    \maketitle
    
\begin{abstract}
    Outlier detection, crucial for identifying unusual patterns with significant implications across numerous applications, has drawn considerable research interest. {Existing semi-supervised methods typically treat data as purely numerical and} in a deterministic manner, thereby neglecting the heterogeneity and uncertainty inherent in complex, real-world datasets. This paper introduces a label-informed outlier detection method for heterogeneous data based on Granular Computing and Fuzzy Sets, namely Granule Density-based Outlier Factor (GDOF). Specifically, GDOF first employs label-informed fuzzy granulation to effectively represent various data types and develops granule density for precise density estimation. Subsequently, granule densities from individual attributes are integrated for outlier scoring by assessing attribute relevance with a limited number of labeled outliers.
    Experimental results on various real-world datasets show that GDOF stands out in detecting outliers in heterogeneous data with a minimal number of labeled outliers.
    The integration of Fuzzy Sets and Granular Computing in GDOF offers a practical framework for outlier detection in complex and diverse data types. 
    All relevant datasets and source codes are publicly available for further research.\footnotetext{
    This is the author’s accepted manuscript of a paper published in \emph{IEEE Transactions on Fuzzy Systems}.
    The final version is available at \url{https://doi.org/10.1109/TFUZZ.2024.3514853}.
    }
    \end{abstract}
                
\begin{IEEEkeywords}
    Outlier detection, Fuzzy sets, Granular computing, Label-informed fuzzy granulation, Granule density, Heterogeneous data
    \end{IEEEkeywords}
    \IEEEpeerreviewmaketitle

\section{Introduction}
    Outlier detection (OD), also referred to as anomaly detection, plays a pivotal role in identifying data points or patterns that significantly {deviate} from the expected or normal behavior. 
    This process is fundamental across various domains, such as fraud detection \cite{pourhabibi2020fraud}, software defect prediction \cite{jiang2022random}, industry control \cite{wang2019outlier}, medical outlier detection \cite{Hawkins1981Identification}, due to its ability to reveal unusual or unexpected behaviors with important implications for the system under study.
 
    Many existing detection methods \cite{Breunig2000LOF, zhang2009anew, Jiang2010An, li2020copod, almardeny2022anovel, li2022robust} are unsupervised due to the scarcity of labeled outliers. While these methods have been effective to an extent, their performance heavily depends on how well their underlying assumptions align with the specific nature of the outliers in the data \cite{han2022ADbench}. This limitation has led to the exploration of semi-supervised detection algorithms \cite{Pang2018RAMODO, Zhao2018XGBOD, Pang2019DevNet, Pang2023PReNet, Ruff2020DeepSAD, Huang2021ESAD, Zhou2022FEAWAD, Chen2024COD}, which utilize a small number of labeled outliers to guide the detection process. 
    However, a notable challenge in these semi-supervised methods is their primary focus on numerical data with deterministic approaches. In practice, datasets often comprise {heterogeneous attributes }\cite{Zhang2016mixeddata, Chen2024COD} where the {attribute values have multiple data types}, such as numerical, categorical, and textual.
    These detectors tend to treat mixed data as purely numerical, thereby imposing artificial properties that may distort the true nature of the data. Furthermore, they usually rely on deterministic approaches to construct detection models \cite{Yuan2023WFRDA}, which may result in overlooking important aspects such as uncertainty, imprecision, or inherent fuzziness present in data. These practices underscore the necessity for an approach adept at managing heterogeneity and uncertainty in data comprehensively.

    Granular Computing (GrC), a framework aimed at simplifying complex problems by breaking them down into smaller, more manageable sub-problems represented by information granules, offers a promising solution. Within this framework, Fuzzy Sets (FS) stand out for their ability to construct information granules with flexible boundaries defined by fuzzy membership functions. 
    {The integration of GrC and FS has shown success in various research areas\cite{Yao2013granular}, including clustering \cite{Pedrycz2010fuzzy}, feature selection \cite{Zhang2016mixeddata, Hu2018large}, outlier detection \cite{yuan2022FRGOD, Chen2024MFIOD}, etc. Our work leverages this approach for two primary benefits:
    Firstly, FS provides the flexibility needed to represent a wide array of data types, from numerical values to categorical variables in its original, eliminating the need for data type transformation.
    Secondly, FS effectively handles} the inherent fuzziness in data by allowing elements to have degrees of membership within a set, offering a more effective way to model data that exhibits uncertainty or ambiguity. 
        
    In this context, we introduce the Granule Density-based Outlier Factor (GDOF), a novel label-informed method for outlier detection in heterogeneous data.
    GDOF leverages the principles of GrC and FS to construct fuzzy granules {and utilizes a small number of labeled outliers to enhance detection performance.} At the core of our approach is the formulation of granule density, which incorporates both local neighborhood information and global density patterns for precise density estimations. This approach allows us to effectively discern between normal and anomalous data patterns with the assumption that outliers typically reside in low-density areas. Subsequently, GDOF integrates granule densities from each attribute by assessing its attribute relevance, which reflects the discriminative power of each attribute in identifying outliers. {Finally, an outlier factor is formulated based on granule densities for each object, indicating the likelihood of the object being an outlier.}
    The main contributions of this paper are:
    \begin{itemize}
        \item This paper proposes a novel label-informed fuzzy granulation method that enables a downstream task-oriented approach to construct fuzzy granules for outlier detection, potentially advancing the application of GrC in real-world contexts.
        \item We introduce the granule density that integrates local neighborhood information and global density patterns for effectively characterizing outliers in heterogeneous data.
        \item We develop a label-informed outlier detection algorithm based on granule density. The algorithm integrates granule densities from each attribute by assessing the attribute relevance with a limited number of labeled data.
        \item Comprehensive experimental validation on various types of public datasets demonstrates that our proposed algorithm performs on par with or surpasses existing state-of-the-art methods.
        \end{itemize}

\section{Related works}
\subsection{Density-based approaches}
    Density-based detection algorithms have drawn significant research interest for their adept handling of data with varying densities and their resilience to diverse data distributions. Generally, these methods operate under the assumption that outliers tend to be situated in regions of low data density. Breunig et al. \cite{Breunig2000LOF} first introduced such a method, the Local Outlier Factor (LOF), which identifies outliers by assessing the local density deviation of a data point relative to its neighbors, particularly in areas of varying densities.
    Following LOF, a range of density-based approaches have been extensively explored.
    For example, the Connectivity-based anomaly Factor (COF) \cite{Tang2002COF} improves LOF by considering the neighbors' connectivity in addition to their density.
    The Local Correlation Integral (LOCI) \cite{Papa2003LOCI} quantifies the anomaly clusters by assessing the multi-granularity deviation in the density within a specified radius of a point. 
    Subsequently, Ren et al. \cite{Ren2004RDF} developed the Relative Density Factor (RDF), which detects outliers by excluding data points deeply embedded within clusters, thereby refining the selection of potential outliers.
    Jin et al. \cite{Jin2006INFLO} proposed the Influenced Outlierness (INFLO) that estimates data density by incorporating both nearest neighbors and reverse nearest neighbors of objects, broadening the context for outlier analysis.
    Later, Tang and He \cite{TANG2017RDOS} improved INFLO on the way of density estimation by incorporating shared nearest neighbors and presented the Relative Density-based Outlier Score (RDOS) based on local kernel density estimation.
    Li et al. \cite{li2022robust} proposed the Directed density-ratio Changing Rate-based Outlier Detection (DCROD) that combines kernel density estimation with an extended neighborhood to rank objects.
    Incorporating probabilistic principles into density-based methods has also yielded notable improvements. For instance, Kriegel et al. \cite{Kriege2009LoOP} combined density-based local outlier scoring with a probabilistic approach in their Local Outlier Probability (LoOP) method.
    Goldstein et al. \cite{Goldstein2012HBOS} developed the Histogram-based Outlier Score (HBOS), employing histograms for each feature and determining outlier scores based on the probability density of the corresponding bins.
    More recently, Yuan et al. \cite{Yuan2023WFRDA} leveraged Fuzzy Rough Sets for granule density estimation and proposed the Weighted Fuzzy-Rough Density-based Anomaly detection algorithm (WFRDA).
    While these methods have made significant progress, their performance largely depends on the alignment of their underlying assumptions with the specific nature of the outliers in the dataset.

\subsection{Semi-supervised approaches}
    The limitation of the unsupervised approaches has led to the exploration of semi-supervised outlier detection methods \cite{Jiang2023WSAD}, which utilize a few labeled outliers to guide the detection process.
    These semi-supervised approaches can be roughly categorized into the following branches: rank-based, representation learning-based, active learning-based, and reinforcement learning-based methods. 
    
    Rank-based methods typically train a ranking model that rates the extent to which an object is an outlier.
    For example, Pang et al. \cite{Pang2019DevNet} introduced an end-to-end deep learning framework known as DevNet, which incorporates a Gaussian prior and the deviation loss for predicting outlier scores.
    Following DevNet, Zhou et al. \cite{Zhou2022FEAWAD} developed FEAWAD, a weakly supervised outlier detection method that uses a deep autoencoder to model normal data patterns. The features learned by this autoencoder are then used to enhance the accuracy of outlier score prediction.
    Additionally, the PReNet model, introduced by Pang et al. \cite{Pang2023PReNet}, employs ordinal regression to rate objects without making any assumptions about the distribution of outliers.  
    
    Representation learning-based models offer an indirect approach to determining outlier scores, typically enhancing unsupervised learning with the aid of labeled outliers.    
    Early solutions, such as OE (Outlier Ensemble) \cite{2014OE} and XGBOD (eXtreme Gradient Boosting for Outlier Detection) \cite{Zhao2018XGBOD}, leverage multiple unsupervised detectors to extract meaningful features and then combine the resulting outlier scores with original features for training a supervised classifier.
    More recently, deep learning techniques have emerged in semi-supervised outlier detection. They generally employ end-to-end frameworks to extract outlier-focused representations. One notable example is DeepSAD, introduced by Ruff et al. \cite{Ruff2020DeepSAD}, which extends the unsupervised detector DeepSVDD (Deep Support Vector Data Description) \cite{Ruff2018DeepSVDD}. DeepSAD incorporates labeled data and penalizes the inverse distances of outlier representations, encouraging them to be projected further away from normal data.
    Following DeepSAD, Huang et al. \cite{Huang2021ESAD} presented an encoder-decoder-encoder architecture (ESAD) {that incorporates information entropy} to detect outliers.
    The other detector REPEN \cite{Pang2018RAMODO} introduces a ranking approach for representation learning of ultra-high dimension data. A few labeled outliers are leveraged to refine the sampling for the training triplets and help the model learn outlier-oriented representations. 
    {Another research direction explores the utilization of generative adversarial networks (GANs) to enhance model training and/or data augmentation. Notably,} Tian et al. \cite{Tian2022AA-BiGAN} propose the anomaly-aware bidirectional GAN model (AA-BiGAN), {which learns to assign low-density values to anomalies.}
    Li et al. \cite{Li2022Dual-MGAN} integrate multiple GANs to realize reference distribution construction and data augmentation for detecting both discrete and grouped outliers.    

    Active learning (AL)-based methods {can acquire more labeled data with human assistance. For instance, Gornitz et al.\cite{Gornitz2013SSAD} propose a detection method that chooses borderline data to label. Another AL-based method is Active Anomaly Discovery (AAD) \cite{Das2016AAD}, which greedily selects the most likely outliers to label} and maximizes the number of true outliers under a query budget. 
    {While the AL mechanism offers flexibility in model development,} the process relies on human labeling, which can become a bottleneck if the data volume is very large or if the labeling task is complex.
    
    Reinforcement learning (RL)-based methods typically consider outlier detection as a sequential decision-making process.
    For instance, the model Meta-AAD, proposed by Zha et al. \cite{Zha2020Meta-AAD}, employs deep reinforcement learning to optimize a meta-policy for selecting the most appropriate samples for manual labeling. {In contrast, Pang et al. \cite{Pang2021DPLAN} introduce} a deep Q-learning-based model DPLAN that utilizes unlabeled data without human help. DPLAN creates an outlier-biased simulation environment to enable outlier detection {in an RL context}. 
    RL-based methods excel in situations where outlier detection can be framed as a sequence of decisions or actions, capturing temporal or sequential relationships in the data. However, they are generally more complex and may require significant computational resources for training and optimization, especially for large-scale problems. Moreover, designing an effective reward system is critical yet challenging, as it must accurately represent the detection goals. 

\section{Preliminaries}
    This section reviews {some basic concepts of Fuzzy Sets to help understand the main content} of this paper.
    
\begin{definition}
    An information system is a tuple $(X, A)$, where $X=\{x_1,x_2,\ldots,x_n\}$ is the set of objects, and $A=\{a_1,a_2,\ldots, a_m\}$ is the set of attributes {associated with each object.}
    \end{definition}

\begin{definition}
    Given an information system $(X, A)$, if $\widetilde{Y}$ is a {mapping} from $X$ to $[0,1]$, then $\widetilde{Y}$ is a fuzzy set on $X$, i.e. $\widetilde{Y}:X\to [0,1]$.
    \end{definition}

    $\forall x_i \in X$, $\widetilde{Y}(x_i)$ is the membership of $x_i$ to $\widetilde{Y}$, or the membership function of $\widetilde{Y}$. The fuzzy set is often denoted by $\widetilde{Y}=\left(\widetilde{Y}({{x}_{1}}),\widetilde{Y}({{x}_{2}}),\ldots ,\widetilde{Y}({{x}_{n}})\right)$, and the cardinality of  $\widetilde{Y}$ is computed by $|\widetilde{Y}|=\sum\limits_{i}{{\widetilde{Y}}({x}_{i})}$.
 
\begin{definition}
    Let $X$ be a set of objects, a fuzzy relation $\widetilde{R}$ on $X$ is a family of fuzzy sets $\widetilde{R}: X\times X \rightarrow [0,1]$.
    \end{definition}
    
    $\forall (x_i,x_j)\in X\times X$, the membership {function $\widetilde{R}(x_i,x_j)$ indicates} the degree to which $x_i$ has a relation $\widetilde{R}$ with $x_j$. A fuzzy relation $\widetilde{R}$ is usually denoted by a fuzzy relation matrix $M({\widetilde{R}})=(r_{ ij})_{n\times n}$, where $r_{ij}=\widetilde{R}(x_i,x_j)$.
    $\forall x_i, x_j \in X$, if a fuzzy relation $\widetilde{R}$ {satisfies: (1) reflexivity: $\widetilde{R}(x_i, x_i)=1$, and (2) symmetry: $\widetilde{R}(x_i, x_j)=\widetilde{R}(x_j, x_i)$, then $\widetilde{R}$ is also a fuzzy similarity relation.} 
    
\begin{definition}\cite{Qian2015fuzzy}
    The fuzzy granular structure of $X$ induced by a fuzzy similarity relation $\widetilde{R}$ is
    \begin{equation}
    G_{\widetilde{R}}(X)=\left\{[x_i]_{\widetilde{R}}\right\}_{x_i\in X},
    \end{equation}
    where $[x_i]_{\widetilde{R}}=\left(r_{i1}^{\widetilde{R}}, r_{i2}^{\widetilde{R}}, \dots, r_{in}^{\widetilde{R}}\right)$ is a fuzzy granule containing the object $x_i$. 
    \end{definition}
    
    One can observe that each fuzzy granule $[x_i]_{\widetilde{R}}$ is a fuzzy set, which {reflects} how similar the object $x_i$ is to all objects in $X$.
    Obviously, $[x_i]_{\widetilde{R}}(x_j)=\widetilde{R}(x_i,x_j)=r^{\widetilde{R}}_{ij}$. If $\widetilde{R}(x_i,x_j)=1$, then it suggests that $x_j$ certainly belongs to $[x_i]_{\widetilde{R}}$; If $\widetilde{R}(x_i,x_j)=0$, then $x_j$ definitely does not belong to $[x_i]_{\widetilde{R}}$. The fuzzy cardinality of $[x_i]_{\widetilde{R}}$ is calculated by
    \begin{equation}\label{eq_cardinality}
        \big|[x_i]_{\widetilde{R}}\big|=\sum\limits_{j=1}^{n}{{\widetilde{R}}({{x}_{i}},{{x}_{j}})}.
        \end{equation}
    We can easily obtain $1\le \big|[x_i]_{\widetilde{R}}\big|\le |X|$. {The cardinality of $[x_i]_{\widetilde{R}}$ reflects the overall similarity of the object $x_i$ to others in $X$ based on the knowledge $\widetilde{R}$.}
    
\section{Methodology}
    This section details the proposed detection method GDOF. Initially, we utilize a subset of labeled data to construct fuzzy information granules for a versatile representation of heterogeneous data. Subsequently, we delve into the concept of granule density and introduce the concept of attribute relevance for assessing the discriminative power of specific attributes with labeled outliers. Lastly, we define the outlier factor based on granule density and predict outliers in data with a threshold. 
    
        
\subsection{Label-informed fuzzy granulation}
The primary concern of outlier detection within heterogeneous data is the effective representation of diverse data types. We adopt fuzzy information granulation, a concept derived from GrC and FS, to tackle this issue. This approach involves breaking down a complex dataset into simpler, more manageable fuzzy granules, where the boundaries of granules are not rigid but fuzzy. Fuzzy granulation allows data points to be associated with multiple granules to varying degrees, aptly capturing the inherent vagueness and ambiguity often present in real-world data.
However, the construction of these granules, especially determining their optimal size (or radius), has been somewhat heuristic (e.g., coverage and specificity \cite{Pedrycz2016Design}), which might not adapt well to diverse or changing data characteristics. 
We address this limitation by incorporating labeled data into the granulation process.
This integration ensures that the construction of granules is not only reflective of the actual data distribution but also relevant and meaningful for specific tasks, such as outlier detection.

This part starts with the formulation of fuzzy relations, based on which the fuzzy granules are subsequently constructed under the guidance of labeled outliers. Additionally, we discuss a negative sampling method for scenarios where labeled normal data are not available.

\subsubsection{Fuzzy relations}
    Fuzzy relations play a pivotal role in fuzzy systems by quantifying the degree of association or similarity between elements. They are crucial in defining the criteria for how information granules are formed. The membership function for fuzzy relations can be defined in various ways. We utilize a hybrid fuzzy membership function \cite{yuan2022FRGOD} to jointly represent multiple data types. Specifically, We mainly recognize three broad data types, i.e., the numerical data, the categorical data, and the combination of the two types (i.e., the mixed type). Due to the varying magnitudes and dimensions of raw numerical data, we first normalize these values into the range of $[0, 1]$ using min-max normalization.

    Let $f_i^k$ be the value of attribute $a_k$ for object $x_i$,
    and $d_{ij}^k=\left|f_i^k - f_j^k\right|$ is the difference between $x_i$ and $x_j$ with respect to (w.r.t.) attribute $a_k$, the fuzzy relation $\widetilde{a}_k(x_i,x_j)$ between $x_i$ and $x_j$ induced by a single attribute $a_k$ is calculated by
    \begin{equation}\label{eq_rel_ak}
        \widetilde{a}_k(x_i,x_j)=\left\{
        \begin{array}{ll}
        \mathbb{I}_{f_i^k=f_j^k},   & \text{if } a_k \text{ is categorical};\\
        1-d_{ij}^k, & \text{if } d_{ij}^k\le \lambda^k, a_k \text{ is numerical};\\
        0,          & \text{if } d_{ij}^k > \lambda^k, a_k \text{ is numerical};\\
        \end{array}\right.
        \end{equation}
    where $\mathbb{I}_{(\cdot)}$ represents an indicator function that returns 1 if the condition is met; otherwise, it returns 0. $\lambda^k\in [0,1]$ is a radius parameter. As the fuzzy relation $\widetilde{a}_k$ satisfies reflexivity: $\widetilde{a}_k(x_i, x_i)=1$ and symmetry: $\widetilde{a}_k(x_i, x_j) = \widetilde{a}_k(x_j, x_i)$, we have that $\widetilde{a}_k$ is a fuzzy similarity relation.
    
    Subsequently, the fuzzy relation $\widetilde{B}$ of an attribute set $B\subseteq A$ can be formulated through fuzzy operations, among which the conjunction operation is most widely adopted \cite{Yuan2021ARsurvey}.

    \begin{equation}\label{eq_rel_B}
        \widetilde{B}(x_i, x_j)=\min_{a_k\in B}{{\widetilde{a}_k}(x_i, x_j)}. 
    	\end{equation}
    \begin{proposition}\label{coro_subset}
        Given an information system $(X, A)$, for any attribute subset $B,C\subseteq A$, if $C\subseteq B$, then $\widetilde{B} \subseteq \widetilde{C}$.
    \end{proposition}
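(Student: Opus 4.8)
The plan is to reduce the claimed fuzzy-set inclusion to a pointwise inequality between membership functions and then invoke the elementary monotonicity of the $\min$ operation with respect to enlarging its index set. First I would make the convention explicit: for two fuzzy relations $\widetilde{P},\widetilde{Q}$ on $X\times X$, the inclusion $\widetilde{P}\subseteq\widetilde{Q}$ means $\widetilde{P}(x_i,x_j)\le\widetilde{Q}(x_i,x_j)$ for every pair $(x_i,x_j)\in X\times X$. Hence it suffices to fix an arbitrary pair $(x_i,x_j)$ and establish $\widetilde{B}(x_i,x_j)\le\widetilde{C}(x_i,x_j)$.

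Next I would expand both sides via \eqref{eq_rel_B}, writing $\widetilde{B}(x_i,x_j)=\min_{a_k\in B}\widetilde{a}_k(x_i,x_j)$ and $\widetilde{C}(x_i,x_j)=\min_{a_k\in C}\widetilde{a}_k(x_i,x_j)$. Since $C\subseteq B$, the finite family $\{\widetilde{a}_k(x_i,x_j)\}_{a_k\in C}$ is a subfamily of $\{\widetilde{a}_k(x_i,x_j)\}_{a_k\in B}$, so the minimum taken over the larger family cannot exceed the minimum taken over the smaller one; that is, $\min_{a_k\in B}\widetilde{a}_k(x_i,x_j)\le\min_{a_k\in C}\widetilde{a}_k(x_i,x_j)$. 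This is exactly $\widetilde{B}(x_i,x_j)\le\widetilde{C}(x_i,x_j)$, and since the pair was arbitrary, the pointwise inequality holds on all of $X\times X$, giving $\widetilde{B}\subseteq\widetilde{C}$.

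There is essentially no obstacle here; the argument is a one-line consequence of the definition of $\widetilde{B}$ as a conjunction of per-attribute relations. The only point worth a remark is the degenerate case $C=\varnothing$ (or $B=\varnothing$): if one adopts the usual convention that an empty conjunction yields the constant relation $1$, the inequality still holds, and in any event the attribute subsets of interest for outlier detection are nonempty, so this case can be excluded without loss.
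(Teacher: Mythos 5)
Your proposal is correct and follows essentially the same route as the paper's own proof: both reduce the inclusion to the pointwise inequality $\widetilde{B}(x_i,x_j)\leq\widetilde{C}(x_i,x_j)$, which follows immediately from the fact that the minimum in Eq.~(\ref{eq_rel_B}) is taken over a larger index set when $C\subseteq B$. Your version merely spells out the monotonicity of $\min$ and the empty-set convention more explicitly than the paper does.
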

    \begin{proof}
        Given that $C\subseteq B$, according to the conjunction rule (Eq.~\ref{eq_rel_B}), $\forall{x_i, x_j \in X}$, we have  $\widetilde{B}(x_i,x_j)\leq \widetilde{C}(x_i,x_j)$. Hence, we obtain $\widetilde{B} \subseteq \widetilde{C}$.
    \end{proof}
    
    Proposition \ref{coro_subset} {describes} the inclusion relation between fuzzy relations induced by attributes; {specifically,} the more attributes adopted, the smaller (fine-grained) the fuzzy relation. 
    
\subsubsection{Constructing fuzzy granules}
    The fuzzy granule centered at an object $x_i$ w.r.t. the fuzzy relation $\widetilde{B}$ can be formulated as
    \begin{equation}
    [x_i]_{\widetilde{B}} = \left(r^{\widetilde{B}}_{i1}, r^{\widetilde{B}}_{i2}, \ldots, r^{\widetilde{B}}_{in}\right). 
        \end{equation}
    In our label-informed framework, we introduce a task-guided approach to determine the granule radius using the labeled data. Specifically, the optimal radius $\lambda^k$ is derived through the maximization of the following objective:
    \begin{equation}\label{eq_lamb}
        \arg\max_{\lambda^k} \frac{\sum\limits_{x_i\in X_-}\sum\limits_{y\in X}\widetilde{a}_k(x_i,y)}{|X_-|\cdot |X|} - \frac{\sum\limits_{x_j\in X_+}\sum\limits_{y\in X}\widetilde{a}_k(x_j,y)}{|X_+|\cdot |X|},
        \end{equation}
    where $X_-$ and $X_+$ denote the subset of the negative instances (i.e., the normal objects or inliers) and positive instances (outliers) in $X$, respectively. 
    The underlying assumption is that outliers are less similar to others than the majority of data. Therefore, they tend to have lower fuzzy relation with others on average based on the knowledge ${\widetilde{a}_k}$. This objective is to find a balance between the labeled outliers and inliers. The optimal $\lambda^k$ for each attribute $a_k$ can be obtained by Dynamic programming or {Genetic Algorithms}.
    
\subsubsection{Negative sampling}
    In our label-informed setting, a few outliers are known. However, the normal objects $X_-$ may not be explicitly presented. In this case, it is necessary to designate certain instances as candidate negative instances. Most semi-supervised detectors \cite{Pang2023PReNet, Zhou2022FEAWAD, Pang2018RAMODO} simply treat unlabeled data as inliers {based on the fact that outliers are very few. However, this approach may be less effective in noisy situations with a substantial number of outliers.}
    
    This paper introduces a negative sampling method based on the average distance of objects. Formally, let $D_i^k=\frac{1}{|X|}{\sum_{x_j\in X}d_{ij}^k}$ be the average distance of an object $x_i$ to all objects in $X$ on attribute $a_k$. We randomly sample $N_-$ instances from the unlabeled objects $X-X_+$ with a probability:
    \begin{equation}\label{eq_neg_sampl}
        p(x_i)=\text{softmax}\left(1-\frac{1}{|A|}{\sum_{a_k\in A}D_i^k}\right),
    \end{equation}
    where the softmax function is a non-linear transformation that ensures that the probabilities $p(x_i)$ are between 0 and 1 and sum up to 1, making it suitable for representing a probability distribution. 
    
    This approach results in the inlier set $X_-$.  The underlying rationale for this sampling method is the common assumption that inliers tend to be closer (i.e., more similar) to the majority of the data, thereby having a lower average distance $D_i^k$. Consequently, objects with smaller average distances are more likely to be inliers and are thus sampled with a higher probability.

\subsection{Granule density}
    Density-based outlier detection methods are popular for their adaptability to varying data densities and robustness against diverse distributions. These methods typically estimate data densities and identify outliers in low-density regions. 
    Since fuzzy granulation provides a flexible representation that adapts to diverse data types with inherent vagueness and ambiguity, we introduce the concept of fuzzy granule density (GD) as a means to characterize outliers. 
    
    Yuan et al. introduced the concept of fuzzy density in their work \cite{Yuan2023WFRDA}, where they propose the notion of fuzzy-rough density for an object $x_i$ w.r.t. to an attribute $a$. This is defined as the fuzzy cardinality of a fuzzy granule, given by $Den_a(x_i)=\frac{1}{|X|}|[x_i]_a|$. This definition incorporates a global perspective by evaluating the granule's cardinality relative to the size of the entire dataset, yet it does not account for the density contributions from the object's local neighbors.
    To capture the local structure, we introduce the concept of the local granule density (LGD).   
       
    \begin{definition}\label{def_LGD}
        Given an attribute set $B\subseteq A$, the local granule density (LGD) of the object $x_i$ w.r.t. the attribute set $B$ is defined as
        \begin{equation}\label{eq_LGD}   
            LGD_B(x_i)={\frac{{|{[x_i]_{\widetilde{B}}}|}}{\frac{1}{|N_i|}\sum_{x_j\in N_i}|[x_j]_{\widetilde{B}}|}},
        \end{equation}
        where $N_i=\left\{x_j|[x_i]_{\widetilde{B}}(x_j)>0\right\}$ is the set of neighbors of $x_i$.
        \end{definition}

    The neighbors of an object are usually determined by a distance metric. In the fuzzy settings, it is natural to take the instances within the corresponding fuzzy granule. Therefore, the local granule density of $x_i$ is determined by the proportion of a fuzzy granule’s cardinality to the average cardinalities of its neighbors. It is local in that the degree depends on how isolated the object is concerning the surrounding neighborhood. Combining the global and local granule density, we have the definition of granule density in the following.

    \begin{definition}\label{def_GD}
        Let $(X,A)$ be an information system and $B\subseteq A$, the granule density (GD) of $x_i \in X$ w.r.t. the attribute set $B$ is defined as
        \begin{equation}\label{eq_GD}   
            GD_B(x_i)= \frac{1}{|X|}|{[x_i]_{\widetilde{B}}}| \cdot LGD_B(x_i).
        \end{equation}
        \end{definition}
        
    In the above definition, 
    $\frac{1}{|X|}|{[x_i]_{\widetilde{B}}}|$ represents the global density information of $x_i$ on the attribute set $B$, and the integrated granule density of an object is determined by the product of the global and local density, 
    thereby providing a more comprehensive description of the data structure. Next, we provide a detailed analysis of the properties of the granule density.

\begin{proposition}\label{prop_gd}
    Let $D$ be a set of objects with {an attribute $a$, if $\forall x_i, x_j \in D$, $d_{ij}^a<\delta$ (i.e., $D$ is a considerably dense area), then} it holds that $(1-\delta)^2<GD_a(x_i)<\frac{1}{1-\delta}$.
    \end{proposition}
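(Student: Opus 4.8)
The plan is to treat $D$ as the ambient universe of discourse (so that $|X|=|D|$ in the definition of $GD_a$) and to exploit the fact that, inside a dense region, every pairwise fuzzy relation value is squeezed into the narrow interval $(1-\delta,1]$; bounding the individual granule cardinalities then propagates mechanically through Eq.~\ref{eq_LGD} and Eq.~\ref{eq_GD}. I will also record the mild hypotheses needed to make the statement precise: $a$ is a numerical attribute, $\delta<1$, and $\delta\le\lambda^a$ so that the radius does not clip the relation within $D$ (this is the natural reading of ``$D$ is a considerably dense area'').

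First I would establish the elementary two-sided bound on a single granule's cardinality. By Eq.~\ref{eq_rel_ak}, since $d_{ij}^a\le\lambda^a$ for all pairs in $D$, we have $\widetilde{a}(x_i,x_j)=1-d_{ij}^a$; combined with $0\le d_{ij}^a<\delta$ and reflexivity $\widetilde{a}(x_i,x_i)=1$, this gives $1-\delta<\widetilde{a}(x_i,x_j)\le 1$ for every $x_i,x_j\in D$. Summing over $x_j\in D$ and using Eq.~\ref{eq_cardinality} yields $(1-\delta)\,|D|<\big|[x_i]_{\widetilde{a}}\big|\le|D|$ for every $x_i\in D$. In particular every similarity value is strictly positive, so the neighborhood $N_i=\{x_j:[x_i]_{\widetilde{a}}(x_j)>0\}$ is all of $D$, hence $|N_i|=|D|$.

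Next I would bound $LGD_a(x_i)$. Because $N_i=D$, the denominator of Eq.~\ref{eq_LGD} is the arithmetic mean of the quantities $\big|[x_j]_{\widetilde{a}}\big|$ over $x_j\in D$, and the bounds of the previous step show this mean also lies in $\big((1-\delta)|D|,\,|D|\big]$. Dividing the bound on the numerator $\big|[x_i]_{\widetilde{a}}\big|$ by the bound on this mean (numerator at its smallest over denominator at its largest, and conversely) gives $1-\delta<LGD_a(x_i)<\tfrac{1}{1-\delta}$. Finally, since the global factor $\tfrac{1}{|X|}\big|[x_i]_{\widetilde{a}}\big|=\tfrac{1}{|D|}\big|[x_i]_{\widetilde{a}}\big|$ lies in $(1-\delta,1]$, I multiply the two ranges as in Eq.~\ref{eq_GD}: the lower endpoints give $(1-\delta)\cdot(1-\delta)=(1-\delta)^2$ and the upper endpoints give $1\cdot\tfrac{1}{1-\delta}=\tfrac{1}{1-\delta}$, which is exactly the claimed inequality $(1-\delta)^2<GD_a(x_i)<\tfrac{1}{1-\delta}$.

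The main obstacle here is not the arithmetic, which is routine, but pinning down the implicit assumptions so the chain of inequalities is legitimate: I must ensure the relation inside $D$ is genuinely $1-d_{ij}^a$ rather than being truncated to $0$ by the radius $\lambda^a$, I need $\delta<1$ for $\tfrac{1}{1-\delta}$ to be well defined and positive, and I must interpret $D$ as the whole object set so the global normalization $\tfrac{1}{|X|}$ matches $\tfrac{1}{|D|}$. Once these are stated up front, the propagation of the bounds through $LGD_a$ and $GD_a$ is immediate.
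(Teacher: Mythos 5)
Your proposal is correct and follows essentially the same route as the paper's proof: bound each granule cardinality in $\bigl((1-\delta)|D|,\,|D|\bigr]$, propagate this through the mean in the denominator of $LGD_a$, and multiply the resulting ranges for the global and local factors. The only difference is that you make explicit the implicit hypotheses ($\delta<1$, $\delta\le\lambda^a$, $D$ serving as the whole universe, and $N_i=D$), which the paper leaves unstated; this is a welcome tightening but not a different argument.
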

   
\begin{proof}
    Since $\forall x_i, x_j \in D$, $d_{ij}^a<\delta$, we have 
    $\widetilde{a}(x_i, x_j)>1-\delta$. Therefore, 
    \[|D|(1-\delta)<|{[x_i]_{\widetilde{a}}}|{\leq}|D|,\] 
    \[|D|(1-\delta)<\frac{1}{|D|}\sum_{x_j\in D}|[x_j]_{\widetilde{a}}|<|D|.\]    
    Then we have
    \[1-\delta<\frac{1}{|D|}|{[x_i]_{\widetilde{a}}}|<1,\]
    \[1-\delta<\frac{|{[x_i]_{\widetilde{a}}}|}{\frac{1}{|D|}\sum_{x_j\in D}|[x_j]_{\widetilde{a}}|}<\frac{1}{1-\delta}.\]
    By Definition \ref{def_GD}, it follows that
    \[(1-\delta)^2<GD_a(x_i)<\frac{1}{1-\delta}.\]
    \end{proof}
    
Intuitively, $D$ corresponds to a dense cluster. Proposition \ref{prop_gd} reflects that the granule density of $x_i$ is bounded in dense areas. If $D$ is a tight cluster, the granule radius $\delta$ for the attribute $a$ can be quite small, thus forcing the granule density of $x_i$ to be quite close to 1. 

\begin{proposition}\label{prop_gd_com}
    Given a dense area $D_1$ with attribute $a$, satisfying $\forall x_i, x_j \in D_1$, $d_{ij} < \delta$, {and the area $D_2=D_1\cup \{x_k\}$, if there exists $x_i\in D_1$ such that $d_{ik}>\delta$ (i.e., $D_2$ is less dense than $D_1$), then} it holds that $GD_a^{(D_1)}(x_i )> GD_a^{(D_2)}(x_i)$.
    \end{proposition}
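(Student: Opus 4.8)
The plan is to expand both granule densities directly from Definitions~\ref{def_LGD} and~\ref{def_GD}, write $GD_a$ as the product of a global factor $\tfrac{1}{|X|}|[x_i]_{\widetilde{a}}|$ and the local factor $LGD_a(x_i)$, and then track how inserting $x_k$ affects each. First I would record the easy consequences of the hypothesis: for every $x_j\in D_1$ we have $\widetilde{a}(x_i,x_j)=1-d_{ij}>1-\delta>0$ (as in the proof of Proposition~\ref{prop_gd}, assume $\delta\le\lambda^a$ so that no similarity within $D_1$ is truncated to $0$), hence the neighborhood $N_i$ equals all of $D_1$ and, writing $g=|[x_i]_{\widetilde{a}}^{(D_1)}|$ and $S=\sum_{x_j\in D_1}|[x_j]_{\widetilde{a}}^{(D_1)}|$, one gets $GD_a^{(D_1)}(x_i)=g^2/S$; moreover $|[x_j]_{\widetilde{a}}^{(D_1)}|>|D_1|(1-\delta)$ for every $j$. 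Passing to $D_2$, the cardinality $|[x_i]_{\widetilde{a}}|$ grows by exactly the single term $\widetilde{a}(x_i,x_k)$, and $d_{ik}>\delta$ forces $\widetilde{a}(x_i,x_k)\le 1-\delta$, i.e.\ that new term is strictly smaller than every term already present in the granule of $x_i$.

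The first substantive step is to show the global factor strictly drops: $\tfrac{1}{|D_2|}|[x_i]_{\widetilde{a}}^{(D_2)}|<\tfrac{1}{|D_1|}g$. This is equivalent to $|D_1|\cdot\widetilde{a}(x_i,x_k)<g$, which follows at once from $g>|D_1|(1-\delta)\ge|D_1|\cdot\widetilde{a}(x_i,x_k)$ — precisely the type of estimate already used to prove Proposition~\ref{prop_gd}.

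Next I would split on whether $x_k$ is a (fuzzy) neighbor of $x_i$ in $D_2$. If $\widetilde{a}(x_i,x_k)=0$, then $N_i$ is unchanged, the numerator of $LGD_a(x_i)$ is unchanged, and each $|[x_j]_{\widetilde{a}}|$ with $x_j\in N_i$ only increases; hence $LGD_a^{(D_2)}(x_i)\le LGD_a^{(D_1)}(x_i)$, and combined with the global drop the claim follows. The remaining case $\widetilde{a}(x_i,x_k)>0$ is the hard one: now $x_k$ joins $N_i$, so $|N_i|$ increases by one and the averaging denominator of $LGD_a(x_i)$ gains $|[x_k]_{\widetilde{a}}^{(D_2)}|=1+T$ (with $T=\sum_{x_j\in D_1}\widetilde{a}(x_j,x_k)$) together with all the increments $\widetilde{a}(x_j,x_k)$ of the old neighbors. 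After the factor $|N_i|$ cancels inside $GD_a$, the claim reduces to the polynomial inequality
\[
g^{2}(2T+1) > \widetilde{a}(x_i,x_k)\big(2g+\widetilde{a}(x_i,x_k)\big)S.
\]
I would try to establish this from $|[x_j]_{\widetilde{a}}^{(D_1)}|\in\big(|D_1|(1-\delta),\,|D_1|\big]$ for all $j$ (so $g$ and $S$ are both tightly pinned), $\widetilde{a}(x_i,x_k)\le 1-\delta$, and the triangle inequality $d_{jk}\le d_{ij}+d_{ik}$, which forces each $\widetilde{a}(x_j,x_k)$ to lie within $\delta$ of $\widetilde{a}(x_i,x_k)$ and hence confines $T$ to the range $\big(|D_1|(\widetilde{a}(x_i,x_k)-\delta),\,|D_1|(\widetilde{a}(x_i,x_k)+\delta)\big)$; this is what couples the otherwise free quantity $T$ to $\delta$.

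The main obstacle is exactly this last polynomial inequality, i.e.\ controlling the local-density denominator. Inserting a distant point shrinks the normalized granule cardinality of every object, but it can simultaneously \emph{inflate} the ratio $LGD_a(x_i)$ because the new point's own granule tends to be comparatively small; the delicate point is to show the global drop established above always dominates this local inflation. Doing so rigorously appears to require the tightness hypothesis $d_{ij}<\delta$ in full strength (through the triangle-inequality coupling of $T$), rather than the looser bounds that suffice for Proposition~\ref{prop_gd}.
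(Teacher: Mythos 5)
Your first case ($\widetilde{a}(x_i,x_k)=0$) is precisely the paper's proof. The paper reads the hypothesis $d_{ik}>\delta$ as forcing $\widetilde{a}(x_i,x_k)=0$, i.e.\ it implicitly identifies $\delta$ with the granule radius $\lambda^k$ of Eq.~(\ref{eq_rel_ak}); then $\big|[x_i]_{\widetilde{a}}\big|$ and $N_i$ are unchanged in passing from $D_1$ to $D_2$, the global factor strictly drops because $|D_2|=|D_1|+1$, and $LGD_a(x_i)$ cannot increase because every neighbor's cardinality can only grow. On that branch your argument is complete and correct, and your derivation of the global drop via $|D_1|\cdot\widetilde{a}(x_i,x_k)<g$ is a clean way to package it.

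The gap is your second case, and it is not merely a missing computation: the inequality you reduce it to is false in general, so no refinement of the triangle-inequality coupling of $T$ to $\delta$ will close it. Concretely, take $\lambda^a=0.5$ and $\delta=0.02$, let $D_1$ consist of $x_i$ at position $0.5$ together with nine points at $0.49$, and place $x_k$ at $1.0$. All pairwise distances inside $D_1$ are at most $0.01<\delta$, and $d_{ik}=0.5>\delta$, yet $\widetilde{a}(x_i,x_k)=0.5$ while $\widetilde{a}(x_j,x_k)=0$ for every other $x_j\in D_1$. One computes $g=9.91$, $S=99.82$, so $GD_a^{(D_1)}(x_i)=g^2/S\approx 0.984$; in $D_2$ the granule of $x_k$ has cardinality only $1.5$, which drags the neighborhood average down to $101.82/11\approx 9.256$ and gives $GD_a^{(D_2)}(x_i)=(10.41/11)\cdot(10.41/9.256)\approx 1.064>GD_a^{(D_1)}(x_i)$. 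The local inflation you correctly identified as the delicate point really does overwhelm the global drop here. The proposition therefore only holds under the reading the paper adopts silently: $\delta$ is the radius $\lambda^a$, so $d_{ik}>\delta$ annihilates the relation and only your first case can occur. Your write-up would be fixed by stating that assumption explicitly and deleting the second case, rather than by trying to prove the polynomial inequality.
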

   
\begin{proof}
    Since $\exists x_i\in D_1, d_{ik}>\delta$, we have $\widetilde{a}(x_i, x_k)=0$.    Therefore, 
    \[\left|[x_i]_{\widetilde{a}}^{(D_1)}\right|=\left|[x_i]_{\widetilde{a}}^{(D_2)}\right|, \quad N_i^{(D_1)}=N_i^{(D_2)}=D_1.\] 
    Then we have \[\frac{1}{|D_1|}\left|[x_i]_{\widetilde{a}}^{(D_1)}\right|>\frac{1}{|D_2|}\left|[x_i]_{\widetilde{a}}^{(D_2)}\right|.\] 
    One the other hand, $\forall x_j \in N_i^{(D_1)}$, we obtain
    \[\left|[x_j]_{\widetilde{a}}^{(D_1)}\right|{\leq}\left|[x_j]_{\widetilde{a}}^{(D_2)}\right|.\]
    Therefore, \[\frac{1}{|N_i^{(D_1)}|}{\sum_{x_j\in N_i^{(D_1)}}\left|[x_j]_{\widetilde{a}}^{(D_1)}\right|}{\leq}\frac{1}{|N_i^{(D_2)}|}\sum_{x_j\in N_i^{(D_2)}}\left|[x_j]_{\widetilde{a}}^{(D_2)}\right|.\]
    Hence, we establish $GD_a^{(D_1)}(x_i)>GD_a^{(D_2)}(x_i)$.
    \end{proof}
    
Proposition \ref{prop_gd_com} is intuitive as adding a more distant object $x_k$ to $D_1$ would create a less dense area $D_2$. This proposition demonstrates a consistent trend for the granule density in response to changes in data density. The density decreases when the data becomes less dense or more varied by adding a distant object. This consistency reinforces the validity of the proposed granule density in capturing data density to detect outliers.

\subsection{Granule density-based outlier factor}
    This subsection delves into the utilization of granule density for outlier characterization and the assignment of the anomalous degree of an object, termed outlier factors. 
    The degree to which an object is considered an outlier is signified by its granule densities across multiple feature spaces. Different attributes contribute variably to outlier detection since some attributes might be more indicative of anomalous behavior than others. Therefore, the careful selection and evaluation of an appropriate attribute set are crucial for devising an effective outlier factor. 

    To assess attribute importance, while previous methods might rely on metrics like entropy \cite{Yuan2023WFRDA}, our approach, informed by labeled data, allows for a more {adaptive solution}. We introduce the concept of attribute relevance to evaluate the discriminative power of a specific attribute set $B$ as follows.
    \begin{definition}
    Let $X_-$ and $X_+$ represent the set of labeled inliers and outliers, respectively. For an attribute set $B$, its attribute relevance in outlier detection is defined as 
    \begin{equation}\label{eq_C_B}
        \gamma_{B}= \frac{1}{|X_-|}\sum_{x_i\in X_-}GD_{B}(x_i) - \frac{1}{|X_+|}\sum_{x_j\in X_+}GD_{B}(x_j).
        \end{equation}
        \end{definition}
        
    In this definition, the attribute relevance $\gamma_B$ is derived from the granule densities of the labeled data with the assumption that outliers reside in low-density regions. A high value of attribute relevance for a known inlier or a low value for a known outlier, when assessed using subset $B$, signifies the effectiveness of this attribute set in outlier detection. Conversely, if the attribute relevance does not substantially differ between outliers and inliers, it suggests that $B$ may not capture essential distinguishing features for effective outlier detection. 

    To determine the attribute set for designing an outlier factor, a widely accepted approach \cite{Goldstein2012HBOS, li2022ecod, Yuan2023WFRDA} is to assume the independence of each attribute. Following this assumption, we calculate the granule density for each attribute and then aggregate them through their corresponding attribute relevance. This aggregation is executed via a weighted summation, thereby combining the distinct influences of each attribute into a comprehensive outlier factor.

\begin{definition}\label{def_GDOF}
    Given an information system $(X, A)$, the granule density-based outlier factor of the object $x_i\in X$ is defined as
    \begin{equation}\label{eq_GDOF}
        GDOF(x_i)=1- \frac{1}{|A|}\sum_{a_k\in A}\gamma_{a_k}\cdot GD_{a_k}(x_i).
        \end{equation}
        \end{definition}
        
    This definition encapsulates the collective influence of all attributes $A$ in determining the outlier degree of an object $x_i$, providing a holistic measure of outlier.

\subsection{Detection algorithm}
    {Following previous works \cite{Yuan2023WFRDA, Chen2024COD}, our method outputs outliers through thresholding.}
    
    \begin{definition}\label{def_thresh}
        Let $\theta$ be a real-valued threshold, $\forall x_i \in X$, if the outlier factor \textit{GDOF}$(x_i) > \theta$, then $x_i$ is regarded as an outlier.
        \end{definition}

    In the label-informed settings, the threshold value is obtained adaptively. A practical approach is to utilize the average of the smallest GDOF of the labeled outliers and the greatest GDOF of inliers (usually by negative sampling). Let $X_+$ and $X_-$ be the set of labeled outliers and inliers, respectively. Then, the optimal threshold ${\theta}^*$ can be computed as follow:
    \begin{equation}\label{eq_theta}
        {\theta}^*=\frac{1}{2}\left(\min_{x_i\in X_+} {GDOF}(x_i)+\max_{x_j\in X_-} {GDOF}(x_j)\right).
    \end{equation}

\begin{algorithm}[!ht]
    \caption{GDOF}\label{alg_GDOF}
    \LinesNumbered
    \KwIn{An information system $(X, A)$ with a few labeled outliers $X_+\subseteq X$, the negative sampling parameter $N_-$.}
    \KwOut{Outlier set $OS$}
    $OS \leftarrow \emptyset$\;
    Sample negative objects $X_-$ by Eq.(\ref{eq_neg_sampl})\;
    \For{Each $a_k\in A$}{
        Compute granule radius $\lambda^k$ by Eq.(\ref{eq_lamb})\;
        Construct fuzzy granules using Eq.(\ref{eq_rel_ak})\;
        }
    \For{Each $x_i\in X$}{
        \For{Each $a_k\in A$}{
        Compute $GD_{a_k}(X_i)$ by Eq.(\ref{eq_GD})\;
        }
        }
    Compute attribute relevances by Eq.(\ref{eq_C_B})\;
    \For{Each $x_i\in X$}{
        Compute $GDOF(x_i)$ using Eq.(\ref{eq_GDOF})\;
        }                
    Compute the threshold $\theta$ by Eq.(\ref{eq_theta})\;
    \For{Each $x_i\in X$}{
        \If {$GDOF(x_i)>\theta$}{
        $OS \leftarrow OS \cup \{x_i\}$\;
        }
    }
    \Return $OS$
    \end{algorithm}

    Algorithm \ref{alg_GDOF} {outlines the main process of the proposed method GDOF.} Since GDOF requires calculating the granule density of each object for each attribute, the worst time complexity for GDOF is $O(mn)$, where $m$ denotes the number of attributes and $n$ represents the number of objects.

\section{Experiments}
\setcounter{footnote}{0}
In this section, we conduct experiments to evaluate detection algorithms. 
All our datasets and codes are publicly accessible online\footnote{https://github.com/ChenBaiyang/GDOF}.

\begin{table}[!ht]
    \caption{Description of the experimental datasets}\label{tab_data}
    \centering
    \tabcolsep=2pt
    \begin{tabular}{cccccccc}
    \toprule
    \multirow{2}{*}{No.} &
      \multirow{2}{*}{Datasets} &
      \multirow{2}{*}{\#Objects} &
      \multicolumn{3}{c}{\#Attributes} &
      \multirow{2}{*}{\#Outliers} &
      \multirow{2}{*}{Data type} \\ \cline{4-6}
       &      &       & Num. & Cat. & Total &     &             \\
       \midrule
        1  & Audiology   & 226   & 0         & 69          & 69    & 57  & Categorical \\
        2  & Breast      & 286   & 0         & 9           & 9     & 85  & Categorical \\
        3  & Mushroom1   & 4429  & 0         & 22          & 22    & 221 & Categorical \\
        4  & Mushroom2   & 4781  & 0         & 22          & 22    & 573 & Categorical \\
        5  & Annealing   & 798   & 10        & 28          & 38    & 42  & Mixed       \\
        6  & Arrhythmia  & 452   & 198       & 81          & 279   & 66  & Mixed       \\
        7  & CreditA     & 425   & 6         & 9           & 15    & 42  & Mixed       \\
        8  & Sick        & 3613  & 6         & 23          & 29    & 72  & Mixed       \\
        9  & Thyroid     & 9172  & 7         & 21          & 28    & 74  & Mixed       \\
        10 & Annthyroid  & 7200  & 6         & 0           & 6     & 534 & Numerical   \\
        11 & Breastw     & 683   & 9         & 0           & 9     & 239 & Numerical   \\
        12 & Cardio      & 1831  & 21        & 0           & 21    & 176 & Numerical   \\
        13 & Ionosphere  & 351   & 32        & 0           & 32    & 126 & Numerical   \\
        14 & Mammography & 11183 & 6         & 0           & 6     & 260 & Numerical   \\
        15 & Musk        & 3062  & 166       & 0           & 166   & 97  & Numerical   \\
        16 & Optdigits   & 5216  & 64        & 0           & 64    & 150 & Numerical   \\
        17 & PageBlocks  & 5393  & 10        & 0           & 10    & 510 & Numerical   \\
        18 & Waveform    & 3443  & 21        & 0           & 21    & 100 & Numerical   \\
        19 & Wilt        & 4819  & 5         & 0           & 5     & 257 & Numerical   \\
        20 & Yeast       & 1484  & 8         & 0           & 8     & 507 & Numerical   \\
        \bottomrule
    \end{tabular}
    \flushleft
    *Note: ``Num.'' and ``Cat.'' refer to the number of numerical attributes and categorical attributes, respectively.
    \end{table}

\subsection{Experiment settings}
The experimental datasets have heterogeneous attributes with various data types, including numerical data, categorical data, and a combination of the two types. {Detailed descriptions of each dataset are provided in Table \ref{tab_data}.}
Following previous studies \cite{zhao2019PyOD, han2022ADbench, Jiang2023WSAD}, we configure the unsupervised detection algorithms to predict new data, employing a small number (e.g., 5) of labeled outliers as a validation set for hyper-parameter tuning (unlabeled data are treated as negative samples). For the semi-supervised models, we vary the number of labeled outliers used in training, ranging from 5 to 30, while the remainder is used for testing. For GDOF, we consistently use 200 candidate inliers for negative sampling. Each experiment is independently repeated 10 times with randomly selected training sets, and the results are averaged for reporting.

\begin{table*}[!ht]
    \centering
    \tabcolsep=2pt
    \caption{AUC scores of detectors @ 5 labeled outliers. The best score is bolded, the 2nd rank is underlined.}\label{AUC}
    \begin{tabular}{c|ccccc|cccccc}\toprule
    Datasets    & DeepSVDD & ECOD           & LUNAR          & DIF         & WFRDA          & DeepSAD     & REPEN       & DevNet         & FEAWAD      & PReNet         & GDOF           \\ \midrule
    Audiology   & 0.552    & {\ul 0.837}    & 0.622          & 0.512       & 0.834          & 0.592       & 0.633       & 0.619          & 0.616       & 0.639          & \textbf{0.871} \\
    Breast      & 0.581    & \textbf{0.659} & \textbf{0.659} & 0.609       & 0.657          & 0.570       & 0.649       & 0.507          & 0.528       & 0.493          & 0.580          \\
    Mushroom1   & 0.542    & {\ul 0.949}    & 0.814          & 0.767       & \textbf{0.971} & 0.942       & 0.929       & 0.902          & 0.910       & 0.886          & 0.924          \\
    Mushroom2   & 0.628    & 0.866          & {\ul 0.924}    & 0.726       & 0.882          & 0.904       & 0.888       & 0.892          & 0.828       & 0.890          & \textbf{0.972} \\
    Annealing   & 0.519    & 0.795          & 0.720          & 0.791       & 0.729          & 0.584       & 0.751       & {\ul 0.851}    & 0.832       & 0.847          & \textbf{0.877} \\
    Arrhythmia  & 0.619    & {\ul 0.807}    & 0.798          & 0.800       & 0.755          & 0.612       & 0.759       & 0.596          & 0.634       & 0.626          & \textbf{0.820} \\
    CreditA     & 0.865    & \textbf{0.991} & 0.837          & 0.867       & 0.975          & 0.739       & 0.948       & 0.805          & 0.709       & 0.813          & {\ul 0.977}    \\
    Sick        & 0.535    & 0.844          & 0.800          & 0.827       & 0.837          & {\ul 0.859} & 0.748       & 0.837          & {\ul 0.859} & 0.809          & \textbf{0.918} \\
    Thyroid     & 0.534    & 0.579          & 0.632          & 0.665       & 0.516          & 0.663       & 0.648       & 0.730          & 0.723       & {\ul 0.735}    & \textbf{0.748} \\
    Annthyroid  & 0.753    & 0.789          & 0.723          & 0.673       & 0.637          & 0.760       & 0.693       & {\ul 0.935}    & 0.794       & 0.931          & \textbf{0.981} \\
    Breastw     & 0.924    & {\ul 0.991}    & 0.974          & 0.748       & \textbf{0.992} & 0.838       & 0.987       & 0.764          & 0.827       & 0.704          & {\ul 0.991}    \\
    Cardio      & 0.522    & \textbf{0.935} & 0.644          & {\ul 0.934} & 0.914          & 0.733       & 0.889       & 0.775          & 0.795       & 0.776          & 0.924          \\
    Ionosphere  & 0.770    & 0.728          & \textbf{0.915} & {\ul 0.900} & 0.784          & 0.818       & 0.846       & 0.637          & 0.617       & 0.597          & 0.853          \\
    Mammography & 0.615    & \textbf{0.906} & 0.837          & 0.762       & 0.839          & 0.870       & 0.872       & 0.896          & 0.857       & {\ul 0.903}    & 0.901          \\
    Musk        & 0.669    & 0.959          & 0.877          & 0.996       & {\ul 0.999}    & 0.989       & {\ul 0.999} & 0.859          & 0.929       & 0.973          & \textbf{1.000} \\
    Optdigits   & 0.667    & 0.606          & 0.452          & 0.597       & 0.942          & 0.869       & 0.609       & 0.988          & {\ul 0.994} & \textbf{0.999} & 0.986          \\
    PageBlocks  & 0.708    & {\ul 0.914}    & 0.748          & 0.879       & 0.868          & 0.888       & 0.902       & 0.800          & 0.730       & 0.776          & \textbf{0.925} \\
    Waveform    & 0.628    & 0.601          & 0.744          & 0.740       & 0.699          & 0.733       & 0.668       & \textbf{0.859} & 0.821       & {\ul 0.854}    & 0.688          \\
    Wilt        & 0.460    & 0.394          & 0.509          & 0.358       & 0.331          & 0.696       & 0.341       & {\ul 0.909}    & 0.867       & \textbf{0.928} & 0.579          \\
    Yeast       & 0.491    & 0.445          & 0.431          & 0.394       & 0.395          & 0.474       & 0.383       & \textbf{0.602} & 0.573       & {\ul 0.600}    & 0.464          \\ \midrule
    Average     & 0.629    & 0.780          & 0.733          & 0.727       & 0.778          & 0.757       & 0.757       & 0.788          & 0.772       & {\ul 0.789}    & \textbf{0.849} \\ \bottomrule
    \end{tabular}
    \end{table*}

\subsection{Comparison methods and settings}
In this experiment, we evaluate GDAD's performance against 10 baseline methods, including 5 unsupervised and 5 semi-supervised algorithms. Most of these are implemented in the library PyOD \cite{zhao2019PyOD} and WSAD \cite{Jiang2023WSAD}, except for WFRDA.

    \textbf{Unsupervised comparison methods:}
    \begin{itemize}
    \item DeepSVDD (2018) \cite{Ruff2018DeepSVDD}: A distance-based model that projects objects into a minimal-volume hypersphere. The epoch number is optimized among \{20, 50, 100, 200\}.
    \item ECOD (2022) \cite{li2022ecod}: A statistical model that estimates the cumulative distribution of objects without parameters.
    \item LUNAR (2022) \cite{Goodge2022LUNAR}: A distance-based method that employs graph neural networks to predict outliers. The nearest neighbors is selected from \{5, 10, 20, 50\}.
    \item DIF (2023) \cite{Xu2023DIF}: An extension of IForest that learns random representation ensembles with neural networks. The estimators is chosen from \{5, 10, 50, 100\}.
    \item WFRDA (2023) \cite{Yuan2023WFRDA}: A density-based method that introduces the fuzzy-rough density to describe outliers. The fuzzy radius is selected from 0.1 to 2.0 {in 0.1 increments}. 
    \end{itemize}

    \textbf{Semi-supervised detection methods:}
    \begin{itemize}
    \item REPEN (2018) \cite{Pang2018RAMODO}: A rank-based method that learns the representations of high-dimensional data. 
    \item DevNet (2019) \cite{Pang2019DevNet}: A rank-based method that learns feature representation {using deviation loss}. 
    \item DeepSAD (2020) \cite{Ruff2020DeepSAD}: A deep learning-based model that optimizes the inverse distances of anomalies. 
    \item FEAWAD (2022) \cite{Zhou2022FEAWAD}: A rank-based model that utilizes a deep autoencoder to fit the normal data.
    \item PReNet (2023) \cite{Pang2023PReNet}: A rank-based method that employs ordinal regression without involving any assumptions about outlier scores.
    \end{itemize}

\setcounter{footnote}{0}
\subsection{Evaluation metrics}
    Our evaluation of detection methods is based on two metrics: AUC (Area Under the receiver operating characteristic Curve) and AP (Average Precision), {both implemented in Python's Scikit-learn library. AUC, ranging} from 0 to 1, measures the algorithm's class discrimination capability, with higher values indicating superior performance. AP, focusing on the rate of correctly identified positives, is the weighted mean of precision at various thresholds. While AUC is robust against class imbalance, AP is more indicative in imbalanced settings.
    
\subsection{Experimental results and analysis}
\subsubsection{Overall performance}
This part focuses on the performance of detectors in real-world datasets with limited labeled outliers. We train detectors with only 5 labeled outliers, using the rest of the unlabeled data as the test set. For fairness, the same training set is used for hyperparameter tuning in unsupervised models.

\begin{table*}[!ht]
    \centering
    \tabcolsep=2pt
    \caption{AP scores of detectors @ 5 labeled outliers. The best score is bolded, the 2nd rank is underlined.}\label{PR}
    \begin{tabular}{c|ccccc|cccccc} \toprule
    Datasets    & DeepSVDD & ECOD           & LUNAR          & DIF         & WFRDA          & DeepSAD     & REPEN       & DevNet         & FEAWAD      & PReNet         & GDOF           \\ \midrule
    Audiology   & 0.292 & 0.649          & 0.436          & 0.260       & {\ul 0.706}    & 0.341          & 0.400 & 0.542          & 0.473          & 0.515          & \textbf{0.800} \\
    Breast      & 0.378 & \textbf{0.467} & 0.438          & 0.379       & {\ul 0.462}    & 0.355          & 0.418 & 0.320          & 0.338          & 0.318          & 0.417          \\
    Mushroom1   & 0.179 & 0.483          & 0.236          & 0.098       & \textbf{0.899} & 0.675          & 0.411 & 0.835          & 0.852          & 0.830          & {\ul 0.894}    \\
    Mushroom2   & 0.398 & 0.365          & 0.583          & 0.189       & 0.673          & 0.610          & 0.401 & {\ul 0.835}    & 0.778          & 0.828          & \textbf{0.915} \\
    Annealing   & 0.091 & 0.199          & 0.106          & 0.104       & 0.110          & 0.085          & 0.115 & 0.494          & 0.482          & {\ul 0.506}    & \textbf{0.524} \\
    Arrhythmia  & 0.233 & 0.448          & 0.428          & {\ul 0.458} & 0.373          & 0.227          & 0.373 & 0.251          & 0.301          & 0.279          & \textbf{0.518} \\
    CreditA     & 0.479 & \textbf{0.916} & 0.388          & 0.298       & 0.855          & 0.331          & 0.643 & 0.621          & 0.501          & 0.650          & {\ul 0.898}    \\
    Sick        & 0.043 & 0.063          & 0.056          & 0.057       & 0.059          & 0.107          & 0.051 & 0.315          & {\ul 0.355}    & 0.302          & \textbf{0.363} \\
    Thyroid     & 0.011 & 0.009          & 0.013          & 0.012       & 0.008          & 0.018          & 0.011 & 0.051          & 0.046          & \textbf{0.067} & {\ul 0.059}    \\
    Annthyroid  & 0.209 & 0.269          & 0.185          & 0.227       & 0.169          & 0.253          & 0.198 & {\ul 0.616}    & 0.370          & 0.592          & \textbf{0.781} \\
    Breastw     & 0.805 & \textbf{0.984} & 0.924          & 0.465       & {\ul 0.979}    & 0.777          & 0.967 & 0.791          & 0.839          & 0.735          & 0.977          \\
    Cardio      & 0.160 & 0.562          & 0.222          & 0.605       & 0.516          & 0.310          & 0.474 & 0.543          & {\ul 0.621}    & 0.577          & \textbf{0.659} \\
    Ionosphere  & 0.589 & 0.638          & \textbf{0.904} & {\ul 0.868} & 0.655          & 0.764          & 0.777 & 0.611          & 0.585          & 0.599          & 0.789          \\
    Mammography & 0.056 & 0.432          & 0.123          & 0.128       & 0.094          & 0.314          & 0.177 & {\ul 0.507}    & 0.275          & \textbf{0.515} & 0.446          \\
    Musk        & 0.307 & 0.504          & 0.646          & 0.896       & {\ul 0.981}    & 0.893          & 0.972 & 0.802          & 0.902          & 0.945          & \textbf{1.000} \\
    Optdigits   & 0.053 & 0.033          & 0.028          & 0.033       & 0.366          & 0.206          & 0.036 & {\ul 0.968}    & 0.842          & \textbf{0.985} & 0.812          \\
    PageBlocks  & 0.339 & 0.517          & 0.354          & 0.506       & 0.373          & 0.555          & 0.538 & \textbf{0.580} & 0.522          & 0.568          & {\ul 0.575}    \\
    Waveform    & 0.070 & 0.038          & 0.138          & 0.063       & 0.047          & \textbf{0.235} & 0.053 & 0.191          & 0.185          & {\ul 0.210}    & 0.055          \\
    Wilt        & 0.045 & 0.041          & 0.049          & 0.037       & 0.036          & 0.106          & 0.036 & 0.339          & \textbf{0.442} & {\ul 0.393}    & 0.061          \\
    Yeast       & 0.328 & 0.331          & 0.314          & 0.291       & 0.324          & 0.319          & 0.287 & \textbf{0.431} & 0.409          & {\ul 0.427}    & 0.332          \\ \midrule
    Average     & 0.253 & 0.397          & 0.329          & 0.299       & 0.434          & 0.374          & 0.367 & 0.532          & 0.506          & {\ul 0.542}    & \textbf{0.594} \\ \bottomrule
    \end{tabular}
    \end{table*}

\begin{table*}[!ht]
    \centering
    \tabcolsep=2pt
    \caption{Average scores {for different} data types.}\label{tab_DType}
    \begin{tabular}{cc|ccccc|cccccc}
    \toprule
    Metric                   & Data type & DeepSVDD& ECOD& LUNAR& DIF& WFRDA& DeepSAD& REPEN& DevNet& FEAWAD& PReNet& GDOF\\         \midrule
    \multirow{3}{*}{AUC}     & Categorical & 0.576 & 0.828 & 0.755 & 0.654 & 0.836 & 0.752 & 0.775 & 0.730 & 0.721 & 0.727 & \textbf{0.837} \\
                             & Mixed       & 0.614 & 0.803 & 0.757 & 0.790 & 0.762 & 0.691 & 0.771 & 0.764 & 0.751 & 0.766 & \textbf{0.868} \\
                             & Numerical   & 0.655 & 0.752 & 0.714 & 0.726 & 0.764 & 0.788 & 0.744 & 0.820 & 0.800 & 0.822 & \textbf{0.845} \\ \midrule
    \multirow{3}{*}{AP}      & Categorical & 0.312 & 0.491 & 0.423 & 0.232 & 0.685 & 0.495 & 0.408 & 0.633 & 0.610 & 0.623 & \textbf{0.757} \\
                             & Mixed       & 0.171 & 0.327 & 0.198 & 0.186 & 0.281 & 0.154 & 0.239 & 0.346 & 0.337 & 0.361 & \textbf{0.472} \\
                             & Numerical   & 0.269 & 0.395 & 0.353 & 0.374 & 0.413 & 0.430 & 0.410 & 0.580 & 0.545 & \textbf{0.595} & 0.590 \\ \bottomrule
    \end{tabular}
    \end{table*}
   
    The experimental comparison results, detailed in Tables ~\ref{AUC} and ~\ref{PR} with 5 labeled outliers, show the effectiveness of GDOF across various datasets. GDOF achieves the highest AUC in 9 out of 20 datasets, including notable performances in Audiology (0.871), Mushroom2 (0.972), and Annthyroid (0.981). It also ranks second in AUC on CreditA (0.977) and Breastw (0.991) datasets. For AP, GDOF excels in 8 out of 20 datasets and holds comparable positions in 4 others, including Mushroom1 and Thyroid. Overall, GDOF secures the highest average scores across all datasets, demonstrating its robustness in outlier detection with limited labeled data in diverse applications.

    However, GDOF's performance is not as strong in some cases when compared to other semi-supervised methods. Notably, it falls behind on datasets Waveform, Wilt, and Yeast. For example, DevNet surpasses GDOF in AUC by 24.85\% on Waveform and 29.74\% on Yeast. Similarly, PReNet's AUC on Wilt is 60.28\% higher than GDOF's. In terms of AP, DeepSAD, FEAWAD, and DevNet outperform GDOF on Waveform, Wilt, and Yeast, respectively. This limitation may stem from GDOF's assumption of dimension independence, which might not be well-suited for datasets with complex inter-attribute dependencies.

\begin{figure*}[!ht]
    \centering
    \makebox[\textwidth][c]{\includegraphics[width=0.95\textwidth]{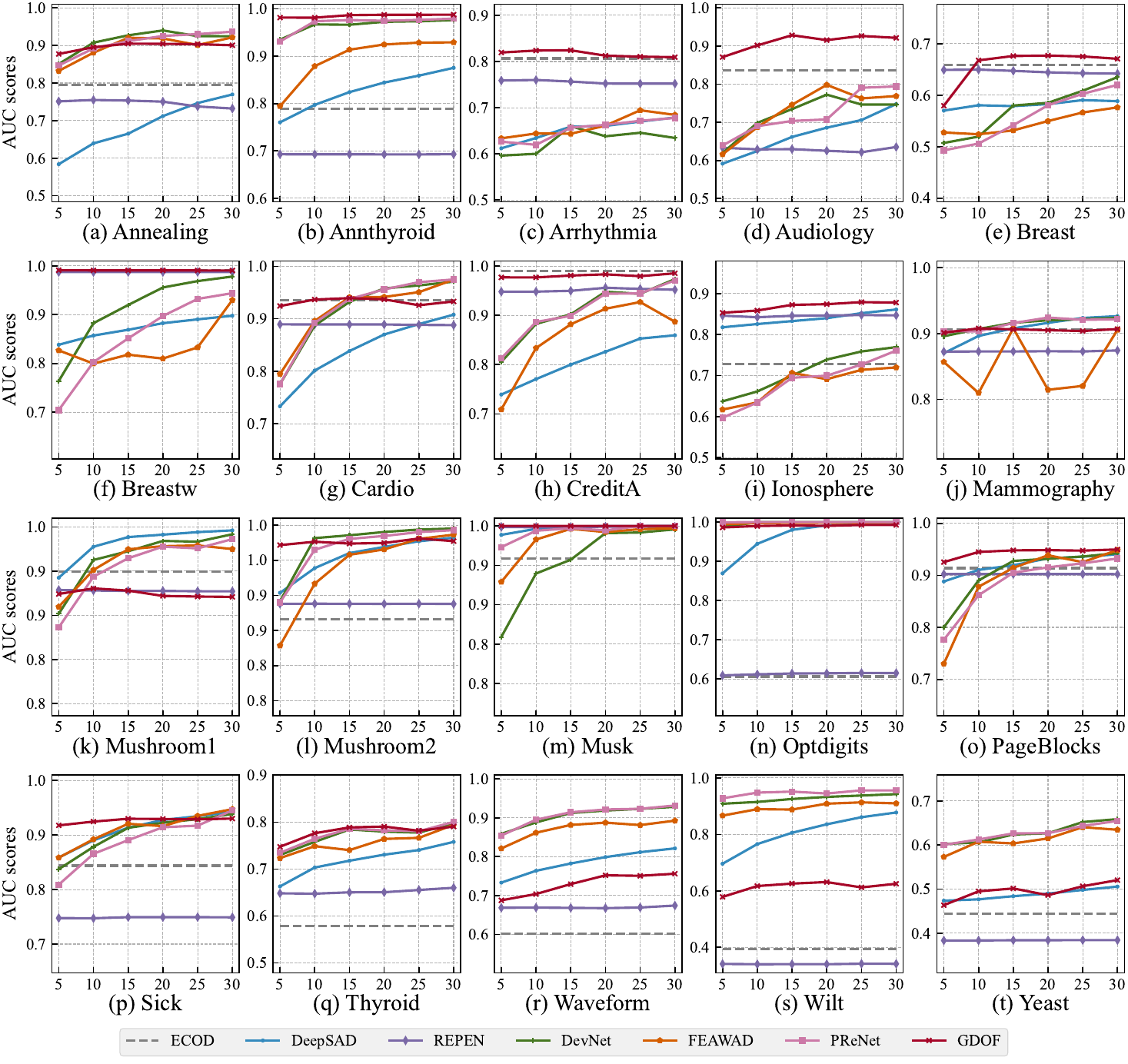}}
    \caption{AUC scores across {different numbers of labeled outliers. The horizontal axis indicates the number of labeled outliers, ranging from 5 to 30.}}
    \label{fig_train}
    \end{figure*}
    
\subsubsection{Detection performances in handling heterogeneous data} 
    This part examines how models perform in heterogeneous datasets characterized by multiple attribute values.
    Table \ref{tab_DType} presents a summary of the average performance of comparison detectors across three types of data: categorical, mixed, and numerical. In terms of both AUC and AP, GDOF stands out, achieving the highest average scores across 4 categorical and 5 mixed datasets. Additionally, GDOF's performance on numerical datasets is on par with that of its competitors. Specifically, GDOF surpasses PReNet by margins of 15.1\% and 13.3\% in AUC for categorical and mixed datasets, respectively. However, in numerical datasets, GDOF's AP score of 0.59 is slightly lower compared to PReNet's 0.595.
    The effectiveness of GDOF in this context can be attributed to its use of Fuzzy granules, which allows it to represent heterogeneous attributes without the need for transforming data types. This approach helps in preserving more critical information that is beneficial for detecting outliers.

\begin{figure*}[!ht]
    \centering
    \makebox[\textwidth][c]{\includegraphics[width=.95\textwidth]{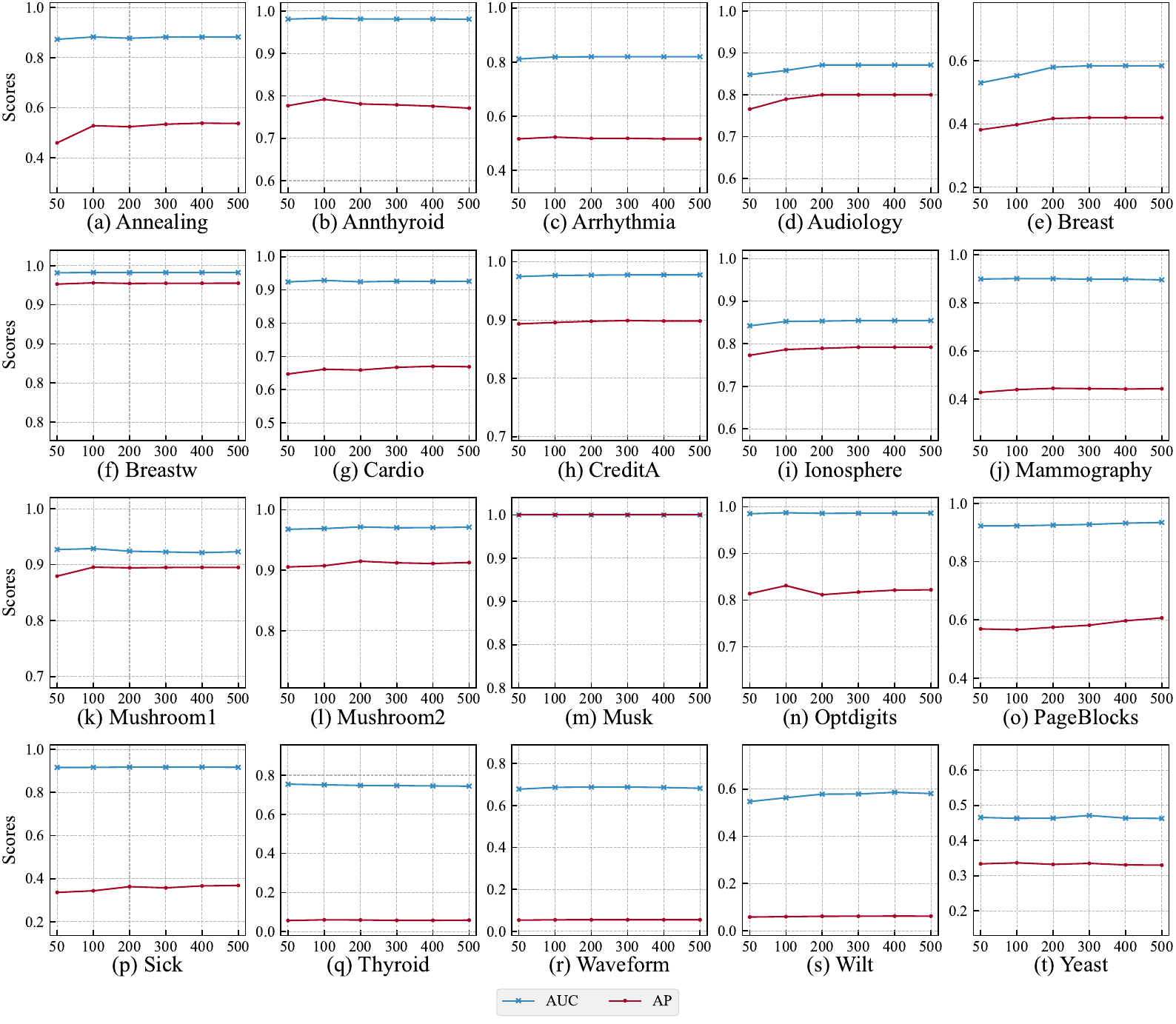}}
    \caption{GDOF's performances across various numbers of normal {objects. The horizontal axis indicates the number of normal objects, ranging from 50 to 500.}}
    \label{fig_N_neg}
    \end{figure*}
   
\subsubsection{Model performance across various numbers of labeled outliers}
    This part explores detection performance across different {numbers of labeled outliers} and investigates the improvements of the semi-supervised methods over the unsupervised algorithm under these conditions. The number of labeled outliers for training varies from 5 to 30, {and the unlabeled data are treated as normal points} during training. This setting aligns with prior studies \cite{Pang2018RAMODO, Pang2019DevNet, Pang2023PReNet} and is akin to training with noisy labels. However, we do not artificially adjust outlier contamination levels since it is more practical in real-world {applications with various numbers of outliers.}
       
    Figure \ref{fig_train} {shows the AUC scores} across multiple numbers of labeled outliers.
    These semi-supervised algorithms tend to improve with the increase of the number of labeled outliers, as more labeled data provide more information for training. However, {some methods fall} with more labeled data in some datasets, e.g., FEAWAD and DevNet on Audiology, GDOF on Wilt. {This may be due to the newly added data exhibiting anomalous behavior that conflicts with existing data, thereby reducing detection performance.}
    Moreover, we also noticed that GDOF does not improve significantly with an increasing {number of labeled outliers in some datasets}, e.g., GDOF's improvements on Waveform, Wilt, and Yeast are much lower than that of DevNet, PReNet or FEAWAD.
    {This is because GDOF relies on} only a few unlabeled data to learn the outlier behavior, making it difficult to capture the full data distribution in some cases, which limits detection performance.
        
    Furthermore, most semi-supervised detection algorithms tend to outperform the leading unsupervised method, ECOD, when only a minimal number of labeled outliers are available. For example, on 14 of 20 datasets, GDOF demonstrates superior performance over ECOD using merely 5 labeled outliers. Similarly, DeepSVD, DevNet, FEAWAD, and PReNet surpass ECOD on over 13 datasets when given 30 labeled outliers in terms of AUC. Among these semi-supervised methods, GDOF emerges as the most data-efficient.

\subsubsection{Detection performances with various parameter settings}
    As GDOF involves sampling candidate normal objects, the number of negative instances $N_-$ could influence its detection efficiency. This part delves into how varying $N_-$ impacts GDOF's performance. We experiment with different settings for $N_-$, i.e., 50, 100, 200, 300, 400, 500, while maintaining a constant count of 5 labeled outliers.
   
    The results, illustrated in Figure \ref{fig_N_neg}, reveal notable trends in GDOF's performance. Both the AUC and AP scores of GDOF generally show a slightly upward trend as $N_-$ increases in most datasets. Also, these scores tend to stabilize when $N_-$ reaches a certain level, such as 100 or 200. This observation underscores that GDOF's effectiveness is not heavily dependent on a large number of normal objects. It exhibits stable performance across these parameter variations, highlighting its robustness in the number of selected negative instances and its efficiency even with a limited number of normal objects.
    Furthermore, these results demonstrate the efficiency of GDOF's negative sampling approach, which utilizes a small number of normal points to construct outlier factors.

\section{Conclusion}
In this paper, we presented the Granule Density-based Outlier Factor (GDOF), a label-informed outlier detection method for heterogeneous data grounded in Granular Computing and Fuzzy Sets. GDOF effectively represents diverse data types through a label-informed fuzzy granulation process and accurately estimates data density with the design of granule density. Finally, we integrate granule densities from individual attributes for outlier scoring by assessing attribute relevance. 
Extensive experiments across 20 diverse real-world datasets indicate that GDOF excels in handling heterogeneous data with a minimal number of labeled outliers for outlier detection. Notably, GDOF outperforms competing detectors and maintains stable performance across different parameter settings. However, challenges arise in datasets with complex inter-attribute dependencies, and the algorithm shows limited improvement with increased levels of label supervision. 
Our future work will focus on enhancing the detection of attribute interdependencies and leveraging the implicit knowledge within the data to further improve model performance.

\bibliographystyle{ieeetr}
\bibliography{mybib}

@book{Hawkins1981Identification,
	title={Identification of outliers},
	author={Hawkins, Douglas M},
	year={1980},
	publisher={Springer}
}

@inproceedings{zhang2009anew,
	title={A new local distance-based outlier detection approach for scattered real-world data},
	author={Zhang, Ke and Hutter, Marcus and Jin, Huidong},
	booktitle={Pacific-Asia Conference on Knowledge Discovery and Data Mining},
	pages={813--822},
	year={2009},
}

@inproceedings{kriege2009loop,
    author = {Kriegel, Hans-Peter and Kr\"{o}ger, Peer and Schubert, Erich and Zimek, Arthur},
    title = {Lo{OP}: Local Outlier Probabilities},
    year = {2009},
    pages = {1649--1652},
    booktitle={Proceedings of the 18th ACM conference on Information and knowledge management }
    }

@article{Goodge2022LUNAR, 
    title={{LUNAR}: Unifying Local Outlier Detection Methods via Graph Neural Networks}, 
    volume={36}, 
    number={6}, 
    journal={Proceedings of the AAAI Conference on Artificial Intelligence}, 
    author={Goodge, Adam and Hooi, Bryan and Ng, See-Kiong and Ng, Wee Siong}, 
    year={2022}, 
    pages={6737-6745}}

@ARTICLE{Xu2023DIF,
    author={Xu, Hongzuo and Pang, Guansong and Wang, Yijie and Wang, Yongjun},
    journal={IEEE Transactions on Knowledge and Data Engineering}, 
    title={Deep Isolation Forest for Anomaly Detection}, 
    year={2023},
    volume={35},
    number={12},
    pages={12591-12604},
    }

@Article{Breunig2000LOF,
  author  = {Breunig, Markus M. and Kriegel, Hans Peter and Ng, Raymond T. and Sander, J},
  title   = {{LOF}: identifying density-based local outliers},
  journal = {Acm Sigmod Record},
  year    = {2000},
  volume  = {29},
  number  = {2},
  pages   = {93-104},
}

@Article{Jiang2010An,
  title={An information entropy-based approach to outlier detection in rough sets},
  author={Jiang, Feng and Sui, Yue Fei and Cao, Cun Gen},
  journal={Expert Systems with Applications},
  volume={37},
  number={9},
  pages={6338--6344},
  year={2010},
  publisher={Elsevier}
}

@INPROCEEDINGS{li2020copod,
    author={Li, Zheng and Zhao, Yue and Botta, Nicola and Ionescu, Cezar and Hu, Xiyang},
    booktitle={2020 IEEE International Conference on Data Mining }, 
    title={COPOD: Copula-Based Outlier Detection}, 
    year={2020},
    volume={},
    number={},
    pages={1118-1123},
    }

@ARTICLE{yuan2022FRGOD,
    author={Yuan, Zhong and Chen, Hongmei and Li, Tianrui and Sang, Binbin and Wang, Shu},
    journal={IEEE Transactions on Cybernetics}, 
    title={Outlier Detection Based on Fuzzy Rough Granules in Mixed Attribute Data}, 
    year={2022},
    volume={52},
    number={8},
    pages={8399-8412},
    }

@ARTICLE{li2022ecod,
  author  = {Li, Zheng and Zhao, Yue and Hu, Xiyang and Botta, Nicola and Ionescu, Cezar and Chen, George H.},
  journal = {IEEE Transactions on Knowledge \& Data Engineering},
  title   = {{ECOD}: Unsupervised Outlier Detection Using Empirical Cumulative Distribution Functions},
  year    = {2023},
  number  = {12},
  pages   = {12181-12193},
  volume  = {35},
  doi     = {10.1109/TKDE.2022.3159580},
  ranking = {rank4},
}

@ARTICLE{almardeny2022anovel,
    author={Almardeny, Yahya and Boujnah, Noureddine and Cleary, Frances},
    journal={IEEE Transactions on Knowledge and Data Engineering}, 
    title={A Novel Outlier Detection Method for Multivariate Data}, 
    year={2022},
    volume={34},
    number={9},
    pages={4052-4062},
    }

@article{li2022robust,
    title = {Robust outlier detection based on the changing rate of directed density ratio},
    journal = {Expert Systems with Applications},
    volume = {207},
    pages = {117988},
    year = {2022},
    issn = {0957-4174},
    author = {Kangsheng Li and Xin Gao and Shiyuan Fu and Xinping Diao and Ping Ye and Bing Xue and Jiahao Yu and Zijian Huang}}

@article{pourhabibi2020fraud,
    title = {Fraud detection: A systematic literature review of graph-based anomaly detection approaches},
    journal = {Decision Support Systems},
    volume = {133},
    pages = {113303},
    year = {2020},
    issn = {0167-9236},
    author = {Tahereh Pourhabibi and Kok-Leong Ong and Booi H. Kam and Yee Ling Boo}
    }

@article{wang2019outlier,
    title = {Outlier detection based on Gaussian process with application to industrial processes},
    journal = {Applied Soft Computing},
    volume = {76},
    pages = {505-516},
    year = {2019},
    issn = {1568-4946},
    author = {Biao Wang and Zhizhong Mao}
    }

@article{Gornitz2013SSAD,
  author    = {Goernitz, N. and Kloft, M. and Rieck, K. and Brefeld, U.},
  journal   = {Journal of Artificial Intelligence Research},
  title     = {Toward Supervised Anomaly Detection},
  year      = {2013},
  issn      = {1076-9757},
  number    = {1},
  pages     = {235--262},
  volume    = {46},
  doi       = {10.1613/jair.3623},
  numpages  = {28},
  publisher = {AI Access Foundation},
}

@INPROCEEDINGS{Das2016AAD,
    author={Das, Shubhomoy and Wong, Weng-Keen and Dietterich, Thomas and Fern, Alan and Emmott, Andrew},
    booktitle={IEEE International Conference on Data Mining }, 
    title={Incorporating Expert Feedback into Active Anomaly Discovery}, 
    year={2016},
    volume={},
    number={},
    pages={853-858},
    }

@INPROCEEDINGS{Zha2020Meta-AAD,
    author={Zha, Daochen and Lai, Kwei-Herng and Wan, Mingyang and Hu, Xia},
    booktitle={IEEE International Conference on Data Mining }, 
    title={Meta-{AAD}: Active Anomaly Detection with Deep Reinforcement Learning}, 
    year={2020},
    volume={},
    number={},
    pages={771-780},
    }

@inproceedings{Pang2021DPLAN,
  author    = {Pang, Guansong and van den Hengel, Anton and Shen, Chunhua and Cao, Longbing},
  booktitle = {Proceedings of the 27th ACM SIGKDD Conference on Knowledge Discovery \& Data Mining},
  title     = {Toward Deep Supervised Anomaly Detection: Reinforcement Learning from Partially Labeled Anomaly Data},
  year      = {2021},
  pages     = {1298–1308},
  doi       = {10.1145/3447548.3467417},
  numpages  = {11},
}

@inproceedings{Pang2023PReNet,
    author = {Pang, Guansong and Shen, Chunhua and Jin, Huidong and van den Hengel, Anton},
    title = {Deep Weakly-Supervised Anomaly Detection},
    year = {2023},
    booktitle = {Proceedings of the 29th ACM SIGKDD International Conference on Knowledge Discovery \& Data Mining},
    pages = {1795–1807},
    keywords = {anomaly detection, deep learning, intrusion detection},
    }

@inproceedings{Pang2019DevNet,
    author = {Pang, Guansong and Shen, Chunhua and van den Hengel, Anton},
    title = {Deep Anomaly Detection with Deviation Networks},
    booktitle = {Proceedings of the 25th ACM SIGKDD International Conference on Knowledge Discovery \& Data Mining},
    pages = {353-362},
    year = {2019},
    type = {Conference Proceedings}
    }

@inproceedings{Pang2018RAMODO,
    author = {Pang, Guansong and Cao, Longbing and Chen, Ling and Liu, Huan},
    title = {Learning Representations of Ultrahigh-dimensional Data for Random Distance-based Outlier Detection},
    booktitle = {Proceedings of the 24th ACM SIGKDD International Conference on Knowledge Discovery \& Data Mining},
    pages = {2041-2050},
    year = {2018},
    type = {Conference Proceedings}
    }

@InProceedings{Ruff2020DeepSAD,
    title     = {Deep Semi-Supervised Anomaly Detection},
    author    = {Ruff, Lukas and Vandermeulen, Robert A. and G{\"o}rnitz, Nico and Binder, Alexander and M{\"u}ller, Emmanuel and M{\"u}ller, Klaus-Robert and Kloft, Marius},
    booktitle = {International Conference on Learning Representations},
    year      = {2020},
    }

@inproceedings{Zhao2018XGBOD,
    author={Zhao, Yue and Hryniewicki, Maciej K.},
    booktitle={2018 International Joint Conference on Neural Networks}, 
    title={{XGBOD}: Improving Supervised Outlier Detection with Unsupervised Representation Learning}, 
    year={2018},
    volume={},
    number={},
    pages={1-8},
    doi={10.1109/IJCNN.2018.8489605}
    }

@article{Zhou2022FEAWAD,
   author = {Zhou, Yingjie and Song, Xucheng and Zhang, Yanru and Liu, Fanxing and Zhu, Ce and Liu, Lingqiao},
   title = {Feature Encoding With Autoencoders for Weakly Supervised Anomaly Detection},
   journal = {IEEE Transactions on Neural Networks and Learning Systems},
   volume = {33},
   number = {6},
   pages = {2454-2465},
   ISSN = {2162-237x},
   year = {2022},
   type = {Journal Article}
    }

@inproceedings{2014OE,
    author = {Micenková, Barbora and McWilliams, Brian and Assent, Ira},
    title = {Learning outlier ensembles: The best of both worlds–supervised and unsupervised},
    booktitle = {Proceedings of the ACM SIGKDD Workshop on Outlier Detection and Description under Data Diversity},
    pages = {51-54},
    year = {2014},
    }

@inproceedings{Huang2021ESAD,
   author = {Huang, Chaoqin and Ye, Fei and Zhao, Peisen and Zhang, Ya and Wang, Yanfeng and Tian, Qi},
   title = {{ESAD}: End-to-end semi-supervised anomaly detection},
   booktitle = {The 32nd British Machine Vision Conference},
   pages = {1-14},
   year = {2021}
}

@inproceedings{Tian2022AA-BiGAN,
    title     = {Anomaly Detection by Leveraging Incomplete Anomalous Knowledge with Anomaly-Aware Bidirectional {GAN}s},
    author    = {Tian, Bowen and Su, Qinliang and Yin, Jian},
    booktitle = {Proceedings of the 31st International Joint Conference on Artificial Intelligence},
    pages     = {2255--2261},
    year      = {2022},
}

@article{Li2022Dual-MGAN,
    author = {Li, Zhe and Sun, Chunhua and Liu, Chunli and Chen, Xiayu and Wang, Meng and Liu, Yezheng},
    title = {Dual-{MGAN}: An Efficient Approach for Semi-Supervised Outlier Detection with Few Identified Anomalies},
    journal = {ACM Transactions on Knowledge Discovery from Data},
    year = {2022},
    volume = {16},
    number = {6},
    articleno = {107},
    keywords = {data augmentation, partially identified group anomalies, Discrete anomalies, distribution construction}
    }

@misc{Jiang2023WSAD,
    author = {Jiang, Minqi and Hou, Chaochuan and Zheng, Ao and Hu, Xiyang and Han, Songqiao and Huang, Hailiang and He, Xiangnan and Yu, Philip S. and Zhao, Yue},
    title = {Weakly Supervised Anomaly Detection: A Survey},
    howpublished={arXiv:2302.04549},
    archivePrefix = {arXiv},
    eprint = {2302.04549},
    year = {2023},
    DOI = {10.48550/arXiv.2302.04549},
}

@inproceedings{Ruff2018DeepSVDD,
   author = {Ruff, Lukas and Vandermeulen, Robert and Goernitz, Nico and Deecke, Lucas and Siddiqui, Shoaib Ahmed and Binder, Alexander and Müller, Emmanuel and Kloft, Marius},
   title = {Deep one-class classification},
   booktitle = {International conference on machine learning},
   pages = {4393-4402},
   ISBN = {2640-3498},
   year = {2018},
   }

@article{Yuan2023WFRDA,
    author = {Yuan, Zhong and Chen, Baiyang and Liu, Jia and Chen, Hongmei and Peng, Dezhong and Li, Peilin},
    title = {Anomaly detection based on weighted fuzzy-rough density},
    journal = {Applied Soft Computing},
    volume = {134},
    pages = {109995},
    ISSN = {1568-4946},
    year = {2023},
    type = {Journal Article}
    }

@article{Zhang2016mixeddata,
    title = {Feature selection in mixed data: A method using a novel fuzzy rough set-based information entropy},
    journal = {Pattern Recognition},
    volume = {56},
    pages = {1-15},
    year = {2016},
    issn = {0031-3203},
    author = {Xiao Zhang and Changlin Mei and Degang Chen and Jinhai Li},
    keywords = {Mixed data, Feature selection, Fuzzy rough set theory, Information entropy},
    }

@article{han2022ADbench,
     author = {Han, Songqiao and Hu, Xiyang and Huang, Hailiang and Jiang, Minqi and Zhao, Yue},
     journal = {Advances in Neural Information Processing Systems (NeurIPS)},
     pages = {32142--32159},
     title = {{ADB}ench: Anomaly Detection Benchmark},
     volume = {35},
     year = {2022}
    }

@article{zhao2019PyOD,
    author  = {Zhao, Yue and Nasrullah, Zain and Li, Zheng},
    title   = {Py{OD}: A Python Toolbox for Scalable Outlier Detection},
    journal = {Journal of Machine Learning Research},
    year    = {2019},
    volume  = {20},
    number  = {96},
    pages   = {1-7},
    }

@article{Yuan2021ARsurvey,
    title = {Attribute reduction methods in fuzzy rough set theory: An overview, comparative experiments, and new directions},
    journal = {Applied Soft Computing},
    volume = {107},
    pages = {107353},
    year = {2021},
    issn = {1568-4946},
    author = {Zhong Yuan and Hongmei Chen and Peng Xie and Pengfei Zhang and Jia Liu and Tianrui Li},
    keywords = {Fuzzy rough set theory, Attribute reduction, Dependency, Uncertainty measure, Discernibility matrix},
    }

@ARTICLE{Pedrycz2016Design,
    author={Pedrycz, Witold and Wang, Xianmin},
    journal={IEEE Transactions on Fuzzy Systems}, 
    title={Designing Fuzzy Sets With the Use of the Parametric Principle of Justifiable Granularity}, 
    year={2016},
    volume={24},
    number={2},
    pages={489-496},
    }

@article{jiang2022random,
    title = {A random approximate reduct-based ensemble learning approach and its application in software defect prediction},
    journal = {Information Sciences},
    volume = {609},
    pages = {1147-1168},
    year = {2022},
    author = {Feng Jiang and Xu Yu and Dunwei Gong and Junwei Du},
    }

@article{Goldstein2012HBOS,
    author = {Goldstein, Markus and Dengel, Andreas},
    title = {Histogram-based outlier score ({HBOS}): A fast unsupervised anomaly detection algorithm},
    year = {2012},
    journal={KI-2012: poster and demo track},
    volume={1},
    pages={59--63},
}

@article{Chen2024MFIOD,
    title = {Fusing multi-scale fuzzy information to detect outliers},
    author = {Baiyang Chen and Yongxiang Li and Dezhong Peng and Hongmei Chen and Zhong Yuan},
    journal = {Information Fusion},
    volume = {103},
    pages = {102133},
    year = {2024},
    issn = {1566-2535},
    }

@article{TANG2017RDOS,
    title = {A local density-based approach for outlier detection},
    journal = {Neurocomputing},
    volume = {241},
    pages = {171-180},
    year = {2017},
    issn = {0925-2312},
    author = {Bo Tang and Haibo He},
    keywords = {Outlier detection, Reverse nearest neighbors, Shared nearest neighbors, Local kernel density estimation},
    abstract = {This paper presents a simple and effective density-based outlier detection approach with local kernel density estimation (KDE). A Relative Density-based Outlier Score (RDOS) is introduced to measure local outlierness of objects, in which the density distribution at the location of an object is estimated with a local KDE method based on extended nearest neighbors of the object. Instead of using only k nearest neighbors, we further consider reverse nearest neighbors and shared nearest neighbors of an object for density distribution estimation. Some theoretical properties of the proposed RDOS including its expected value and false alarm probability are derived. A comprehensive experimental study on both synthetic and real-life data sets demonstrates that our approach is more effective than state-of-the-art outlier detection methods.}
    }

@InProceedings{Tang2002COF,
    author="Tang, Jian and Chen, Zhixiang and Fu, Ada Wai-chee and Cheung, David Wai-Lok",
    title="Enhancing Effectiveness of Outlier Detections for Low Density Patterns",
    booktitle="Advances in Knowledge Discovery and Data Mining",
    year="2002",
    pages="535--548",
    isbn="978-3-540-47887-4"
    }

@InProceedings{Papa2003LOCI,
    author = {Papadimitriou, Spiros and Kitagawa, Hiroyuki and Gibbons, Phillip B. and Faloutsos, Christos},
    title = {{LOCI}: Fast outlier detection using the local correlation integral},
    year = {2003},
    booktitle={Proceedings of the International Conference on Data Engineering},
    pages = {315–326},
    }

@InProceedings{Jin2006INFLO,
  author    = {Jin, Wen and Tung, Anthony K. H. and Han, Jiawei and Wang, Wei},
  booktitle = {Advances in Knowledge Discovery and Data Mining},
  title     = {Ranking Outliers Using Symmetric Neighborhood Relationship},
  year      = {2006},
  pages     = {577--593},
  doi       = {10.1007/11731139_68},
}

@INPROCEEDINGS{Ren2004RDF,
    author={Dongmei Ren and Baoying Wang William Perrizo},
    booktitle={4th IEEE International Conference on Data Mining}, 
    title={{RDF}: a density-based outlier detection method using vertical data representation}, 
    year={2004},
    volume={},
    number={},
    pages={503-506},
    }

@ARTICLE{Qian2015fuzzy,
    author={Qian, Yuhua and Li, Yebin and Liang, Jiye and Lin, Guoping and Dang, Chuangyin},
    journal={IEEE Transactions on Fuzzy Systems}, 
    title={Fuzzy Granular Structure Distance}, 
    year={2015},
    volume={23},
    number={6},
    pages={2245-2259},
    keywords={Uncertainty;Information entropy;Measurement uncertainty;Bridges;Fuzzy sets;Knowledge based systems;Granular computing;Fuzzy granular structure distance;Fuzzy information granularity;Fuzzy information entropy;Granular computing (GrC);fuzzy granular structure distance;fuzzy information entropy;fuzzy information granularity},}

@ARTICLE{Hu2018large,
    author={Hu, Qinghua and Zhang, Lingjun and Zhou, Yucan and Pedrycz, Witold},
    journal={IEEE Transactions on Fuzzy Systems}, 
    title={Large-Scale Multimodality Attribute Reduction With Multi-Kernel Fuzzy Rough Sets}, 
    year={2018},
    volume={26},
    number={1},
    pages={226-238},
    keywords={Rough sets;Kernel;Algorithm design and analysis;Numerical models;Videos;Computational modeling;Pattern recognition;Fuzzy rough sets;multikernel learning;multimodality attribute reduction;parallel computing},}

@ARTICLE{Pedrycz2010fuzzy,
    author={Pedrycz, Witold and Loia, Vincenzo and Senatore, Sabrina},
    journal={IEEE Transactions on Fuzzy Systems}, 
    title={Fuzzy Clustering With Viewpoints}, 
    year={2010},
    volume={18},
    number={2},
    pages={274-284},
    keywords={Prototypes;Clustering algorithms;Fuzzy sets;Information analysis;Optimization methods;Decision making;Supervised learning;Taxonomy;Unsupervised learning;Modeling;Domain knowledge;fuzzy C-means (FCM);fuzzy sets of type 2;granular computing;granular prototypes;information granules;viewpoints},}

@ARTICLE{Yao2013granular,
    author={Yao, Jing Tao and Vasilakos, Athanasios V. and Pedrycz, Witold},
    journal={IEEE Transactions on Cybernetics}, 
    title={Granular Computing: Perspectives and Challenges}, 
    year={2013},
    volume={43},
    number={6},
    pages={1977-1989},
    keywords={Fuzzy sets;Hospitals;Sections;Educational institutions;Problem-solving;Information processing;Granular computing;granular relationships;granules;information processing},
    doi={10.1109/TSMCC.2012.2236648}}

@article{Chen2024COD,
    title = {Consistency-guided semi-supervised outlier detection in heterogeneous data using fuzzy rough sets},
    journal = {Applied Soft Computing},
    volume = {165},
    pages = {112070},
    year = {2024},
    issn = {1568-4946},
    author = {Baiyang Chen and Zhong Yuan and Dezhong Peng and Xiaoliang Chen and Hongmei Chen},
    }

\end{document}